\def\BibTeX{{\rm B\kern-.05em{\sc i\kern-.025em b}\kern-.08em
    T\kern-.1667em\lower.7ex\hbox{E}\kern-.125emX}}
\newtheorem{theorem}{Theorem}
\newtheorem{lemma}[theorem]{Lemma}
\newtheorem*{remark}{Remark}
\theoremstyle{definition}
\begin{document}

\title{On The Evaluation of\\Collision Probability along a Path
}
\author{\IEEEauthorblockN{Lorenzo Paiola$^{1,2}$}, %
    \IEEEauthorblockN{Giorgio Grioli$^{1,2}$}, %
    \IEEEauthorblockN{Antonio Bicchi$^{1,2}$}
    \thanks{$^{1}$Soft Robotics for Human Cooperation and Rehabilitation, Istituto Italiano di Tecnologia, Genova 16163, Italy
        ({\tt\small lorenzo.paiola@iit.it}).}
    \thanks{$^{2}$Department of Information Engineering and Centro di Ricerca \enquote{Enrico Piaggio}, University of Pisa, 56122, Pisa, Italy.}
}

\maketitle

\begin{abstract}
    Characterizing the risk of operations is a fundamental requirement in robotics, and a crucial ingredient of safe planning.
    The problem is multifaceted, with multiple definitions arising in the vast recent literature fitting different application scenarios and leading to different computational approaches.
    A basic element shared by most frameworks is the definition and evaluation of the probability of collision for a mobile object in an environment with obstacles.
    We observe that, even in basic cases, different interpretations are possible.

    This paper proposes an index we call \enquote{Risk Density}, which offers a theoretical link between conceptually distant assumptions about the interplay of single collision events along a continuous path.

    We show how this index can be used to approximate the collision probability in the case where the robot evolves along a nominal continuous curve from random initial conditions. Indeed, under this hypothesis the proposed approximation outperforms some well-established methods either in accuracy or computational cost.
\end{abstract}

\begin{IEEEkeywords}
    risk, collision, robotics, safety, probability
\end{IEEEkeywords}

\section{INTRODUCTION}
\label{sec:intro}
\IEEEPARstart{A}{s} more and more autonomous systems are asked to perform safety-critical tasks, the importance of evaluating the \emph{risk} associated with a set of actions of a robot is becoming paramount. Several applications, including warehouse management, autonomous driving, medical and cooperative robotics, require the system not only to be \emph{safe} at any instant in time, but to plan actions that assure safety also in the future \cite{fraichardShortPaperMotion2007}. Uncertainty plays a central role in the process of associating a measure of how problematic planned actions could be. Indeed, what the agent knows about itself and its environment drastically affects the potential risk of its plan.

A centerpiece of risk evaluation for mobile robots is the ability to give an accurate estimate of \enquote{the probability of the robot not being able to finish the path} \cite{xiaoRobotRiskAwarenessFormal2020} . Indeed, this definition requires, even in the simplest case, the knowledge of the probability of collision of the robot, be it in the continuous or the discrete case \cite{masahiroonoIterativeRiskAllocation2008}, \cite{hanNonGaussianRiskBounded2022}.

\begin{figure}
  \centering
  \begin{subfigure}{0.6\columnwidth}
    \centering
    \includegraphics[clip,width=\textwidth]{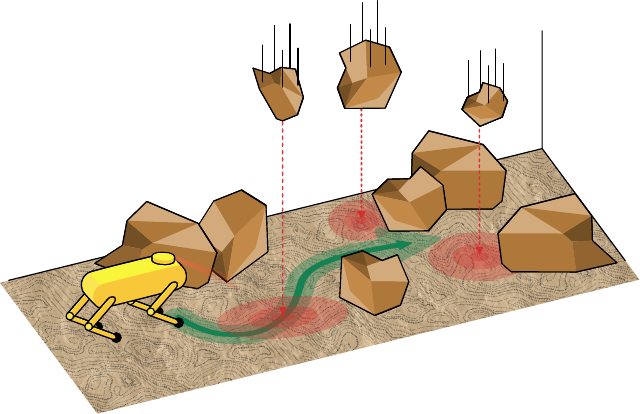}
    \caption{Hazardous mine operation.\label{sfig:mine-H1}}
  \end{subfigure}
  \begin{subfigure}{0.6\columnwidth}
    \centering
    \includegraphics[clip,width=\textwidth]{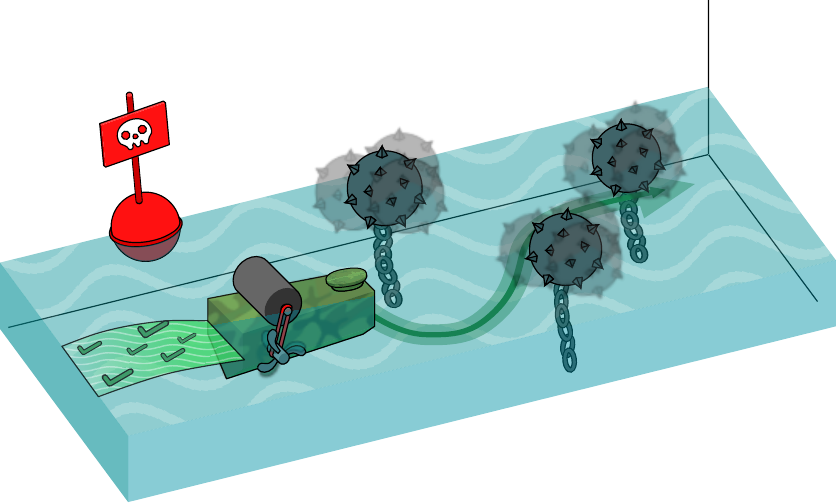}
    \caption{Mined sea.\label{sfig:sweeper-H2}}
  \end{subfigure}
  \begin{subfigure}{0.6\columnwidth}
    \centering
    \includegraphics[clip,width=\textwidth]{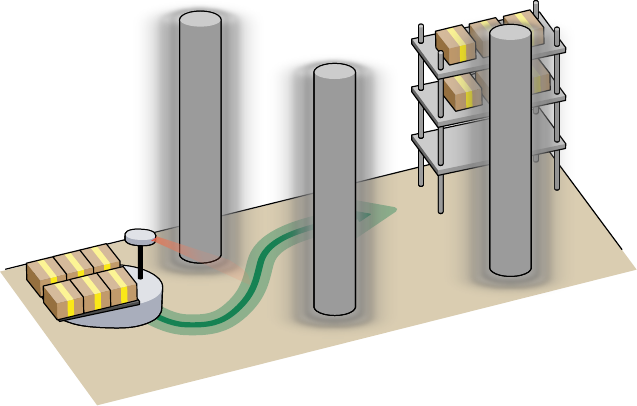}
    \caption{Unmapped warehouse.\label{sfig:delivery-H3}}
  \end{subfigure}
  \caption{Depiction of different scenarios where collisions have to be modeled in different ways. (a) shows an environment with areas where falling rocky debree could hit the robot at any point in time. (b) shows a robot boat navigating in a randomly mined body of water. (c) shows a warehouse robot navigating an uncertain environment.\label{fig:scenarios}}
\end{figure}

This estimation problem is still not considered solved, as all the methods found in literature, to the best of the authors' knowledge, fail to give a good estimate of this quantity while capturing all the aspects of the problem at hand.  Specifically, approximation schemes fail to assess the true value of the probability or are computationally prohibitive and usually depend on parameters foreign to the problem setting.
Moreover, even the very definition of the probability of a continuous set of events (in our case, the collision probability along a path) is not univocally defined without further assumptions about the process and the interaction between events happening at different configurations, as shown in \Cref{fig:scenarios}.

The literature reports multiple methods to evaluate event probabilities, each with merits and limitations.
Monte Carlo simulations \cite{blackmoreProbabilisticParticleControlApproximation2010,lambertFastMonteCarlo2008,owenMonteCarloTheory2013} can accurately estimate collision probability, but are computationally intensive since estimating the desired distribution requires a large number of samples.
Moreover, the discrete and \enquote{batch} nature of Monte Carlo methods is at odds with a continuous optimization framework \cite{freyCollisionProbabilitiesContinuousTime2020}.
Approaches to lower the computational hardness of this method are possible: e.g., \cite{jansonMonteCarloMotion2015} uses importance sampling techniques to make the simulations satisfy real-time requirements.

Chance constraints (CC) \cite{hanNonGaussianRiskBounded2022,dutoitProbabilisticCollisionChecking2011,blackmoreConvexChanceConstrained2009,shapiroLecturesStochasticProgramming2009} indicate imposed restrictions on the states which have to hold in a probabilistic sense.
CCs can concern either single events or, more commonly, joint sets of these.
Joint chance constraints (JCC) \cite{Hessem2004StochasticIC} then require an evaluation of the probability of the set of events which, in our case, is the set of collision events along a path.
Unfortunately, JCC generally have to be approximated to be used, as their interdependent nature makes them hard to evaluate either computationally or analytically.

JCC evaluation can be computed either at once, or sequentially. In both cases, several techniques have been developed to tackle this evaluation.
In the first category, we find Ellipsoidal approximations \cite{Hessem2004StochasticIC} that simplify the multivariate integral of the joint constraint to be one-dimensional, offering good computational performance, but approximating the actual value in a very conservative way.
Instead of evaluating the joint constraint at once, sequential methods consist in subdividing the JCC into individual CCs and combining their appraised value.
Indeed, in this case, the approximation to be employed is two-fold: both the value of each singular CC and their mutual interaction are simplified for computation.
In practice, either Boole's Lemma is used to relax the joint constraint into a sum of multiple simpler scalar chance constraints \cite{blackmoreProbabilisticApproachOptimal2006}, or the constraints are considered independent variables and combined in a multiplicative way. Individual chance constraints can be translated to deterministic ones using the approximations used for joint constraints or by using more ad-hoc approaches \cite{dutoitProbabilisticCollisionChecking2011,parkEfficientProbabilisticCollision2016,parkFastBoundedProbabilistic2018,parkFastBoundedProbabilistic2020}. The additive relaxation result of Boole's Lemma is known to be conservative. However, techniques such as risk-allocation \cite{masahiroonoIterativeRiskAllocation2008,blackmoreChanceConstrainedOptimalPath2011} can mitigate this by assigning a larger \enquote{probability mass} to stages of the path where the collision is more likely.

Occupancy grids methods \cite{xiaoRobotRiskAwarenessFormal2020,agha-mohammadiSMAPSimultaneousMapping2016,elfesUsingOccupancyGrids1989} are another prevalent choice used for obstacle avoidance.
Such approaches discretize the robot workspace in cells and assign an occupancy probability value to each cell to perform risk evaluation.
Grid-based methods, by considering each cell of the grid independent from one another, use a multiplicative relaxation of the cumulative probability. While being the most popular choice to compute the collision probability along a path, grid-based approaches are limited. Indeed, choosing a discretization mesh severely influences the computed value of the collision probability \cite{laconteLambdaFieldContinuousCounterpart2019}.

All the previous approaches, except for Monte Carlo sampling, work well for linear problems and Gaussian random variables. More general approaches exist, able to tackle both nonlinear systems and more general random variables via Lasserre Hierarchy \cite{lasserreMomentSOSHierarchy2018}, \cite{hanNonGaussianRiskBounded2022}, but these are computationally prohibitive.
The language of exit times is used in \cite{freyCollisionProbabilitiesContinuousTime2020} to derive an estimate in a continuous and nonlinear setting, but considers only the robot uncertainty.

\subsection{Contribution}
Our contribution is to illustrate some of the problems encountered in applying Grid-Based and JCCs methods and to propose a novel method to characterize and approximate the robot collision probability along a continuous trajectory. To do so, we focus on a well-defined case with deterministic dynamics, where a rigorous definition of the collision probability along a continuous path can be given.
The proposed approximation shows better or comparable accuracy to other approaches while being computationally competitive. This approximation depends on the objects' combined dimension and a quantity, which we call risk density, that depends only on the parametrization of the path followed by the robot, the obstacle position, and no exogenous parameters. Additionally, this metric gauges the continuous path as a whole instead of subdividing it into waypoints.
We stress that our intent is both to highlight the existence of the problem and to give computationally tractable tools to tackle it.

The paper is organized as follows. \Cref{sec:back} explores how the probability of intersection between two uncertain shapes can be formalized and computed. In \Cref{sec:statement} we discuss how this formalization translates to the problem of collision probability computation along a continuous path. In this section, we discuss some possible assumptions about the interdependence of events and define the probability of collision as a consequence; secondly, we discuss an example of how this probability can be computed and highlight some issues due to the continuity of the problem.
\Cref{sec:problems} briefly describes current approaches and their limitations.
In \Cref{sec:approach}, we propose the \emph{Risk Density} index, briefly discuss its relevance and introduce an approximation to the collision probability along a continuous path. In \Cref{sec:numval}, we display the computational experiments we have carried out as validation and compare our approximation performance to other techniques. In \Cref{sec:discussion} we discuss the properties of the approach and how these are revealed in the benchmark proposed while also highlighting the limitations of the method.
In \Cref{sec:newapprox} we show how the same \emph{Risk Density}, can be used to correct a previously computed Monte Carlo estimate, after the simulation setting has been altered.
Finally, in \Cref{sec:trajopt} some applications of \emph{Risk Density} in a trajectory optimization setting are presented.
\Cref{sec:conclusion} concludes the paper.

\section{BACKGROUND}
\label{sec:back}
Consider a robot and an obstacle immersed in a workspace $\mathbb{W} \subseteq \mathbb{R}^n$.
Let $\mathbb{X}_R(x_R)\subseteq \mathbb{W}$ be the set of points occupied by the robot and $\mathbb{X}_O(x_O)\subseteq \mathbb{W}$ the set occupied by the obstacle, where $x_R, x_O \in \mathbb{W}$ are the reference frames locations for the robot and obstacle, respectively. 

The overlap condition \cite{lavallePlanningAlgorithms2006,dutoitProbabilisticCollisionChecking2011} defined as 
\begin{equation}
    \label{eq:collisionevent}
    C^{[0]}(x_R,x_O) : \mathbb{X}_R(x_R) \cap \mathbb{X}_O(x_O) \ne \emptyset\,,
\end{equation}
indicates wherever the two sets of points intersect, or in other words, that a collision has happened if we consider a dynamic scenario.

The GJK algorithm \cite{gilbertFastProcedureComputing1988} is a standard practical tool to check \cref{eq:collisionevent} when all the information about the shape and position of both the robot and object is wholly known. 
One of its building blocks is the notion of Minkowski sum on sets,
\begin{equation}
    \label{def:minkadd}
    A\oplus B := \{a+b | a\in A, b\in B   \},
\end{equation}
which can be used to define an equivalent condition to \cref{eq:collisionevent}, which is
\begin{equation}
    \label{eq:minkcollision}
    C^{[0]}(x_R,x_O) : \{\overline{0}_n\} \subseteq \mathbb{X}_R(x_R) \ominus \mathbb{X}_O(x_O),
\end{equation}
where $\overline{0}_n\in \mathbb{R}^n$ is the $n$-dimensional origin and $A\ominus B \triangleq A \oplus (-B)$ is sometimes known as Minkowski difference. 

Unfortunately, the GJK algorithm is rendered useless as soon as uncertainty is introduced in the problem. 
In fact, even assuming that both the obstacle and the robot are perfectly known in shape but uncertain in position, condition \cref{eq:collisionevent} cannot be checked in a deterministic way anymore, and we are required to define and appraise an overlap/collision probability.

A possible way to define such a probability is to resort to  an \emph{indicator function}, defined from \eqref{eq:collisionevent} as
\begin{equation}
    \label{eq:indicatorcollision}
    I_{C^{[0]}}(x_R,x_O) =
    \begin{cases}
        1 & \text{if } \mathbb{X}_R(x_R) \cap \mathbb{X}_O(x_O) \ne \{ \emptyset \}, \\
        0 & \text{otherwise.}
    \end{cases}
\end{equation}
Now, if we consider $x_R$ and $x_O$ to be realizations of two random variables, $R$ and $O$, characterized by a joint probability density function $f_{R,O}(r,o)$, the overlap or collision probability is
\begin{equation}
    \label{eq:probcollisionevent}
    P(C^{[0]}) = \int_{\mathbb{W}} \int_{\mathbb{W}} I_{C^{[0]}}(r,o) f_{R,O}(r,o) drdo\,.
\end{equation}

While \eqref{eq:probcollisionevent} gives the actual value of $P(C^{[0]})$, its numerical computation is, in general, not trivial as it consists of two multidimensional integrals in $\mathbb{R}^n$, where at each point $(r,o)$ one should calculate $I_{C^{[0]}}$, e.g. via GJK.
However, that analysis can be simplified by resorting to \eqref{eq:minkcollision} and shifting to another perspective.
In a similar fashion to \cite{parkEfficientProbabilisticCollision2016}, we decompose the uncertain sets $\mathbb{X}_R$ and $\mathbb{X}_O$ as
\begin{equation}
    \label{eq:setdecompositionR}
    \mathbb{X}_{R} = S_{R} + d_{R} + N_{R}(\omega),
\end{equation}
\begin{equation}
    \label{eq:setdecompositionO}
    \mathbb{X}_{O} = S_{O} + d_{O} + N_{O}(\omega),
\end{equation}
as shown in \Cref{fig:decompositionuncertainset}.
Now $S_R\subseteq \mathbb{W}$ and $S_O\subseteq \mathbb{W}$ are sets centered in the origin of the deterministic shapes, $d_R,\ d_O \in \mathbb{W}$ are vectors equal to the mean of $R$ and $O$ respectively, while $N_R$ and $N_O$ are zero-mean Random Variables such that $R=d_R+N_R$, $O=d_O+N_O$.

Condition (\ref{eq:minkcollision}) becomes equivalent to
\begin{flalign}
         & \{0\} \subseteq \mathbb{X}_R(R) \ominus \mathbb{X}_O(O)\nonumber  &                           \\
    \iff & \{0\} \subseteq (S_R\ominus S_O) + (d_R-d_O) + (N_R-N_O)\nonumber &                           \\
    \iff & \{0\} \subseteq S_{RO} + d_{RO} + N_{RO}\nonumber                 &                           \\
    \iff & N_{OR} \in S_{RO} + d_{RO}                                        & \label{eq:collmovingset}  \\
    \iff & d_{OR}+N_{OR} \in S_{RO},                                         & \label{eq:collmovingdist}
\end{flalign}
where 
\begin{equation}
    d_{RO} =  d_R - d_O = -d_{OR},  \label{dRO}
\end{equation}
is the distance vector between the mean of the two random variables and
\begin{equation}
    N_{RO} = N_R - N_O = -N_{OR},  \label{NRO}
\end{equation}
is a zero-mean Random Variable describing the uncertainty of such a vector.

\begin{figure}  
    \centering
    \includegraphics[clip,width=0.8\linewidth]{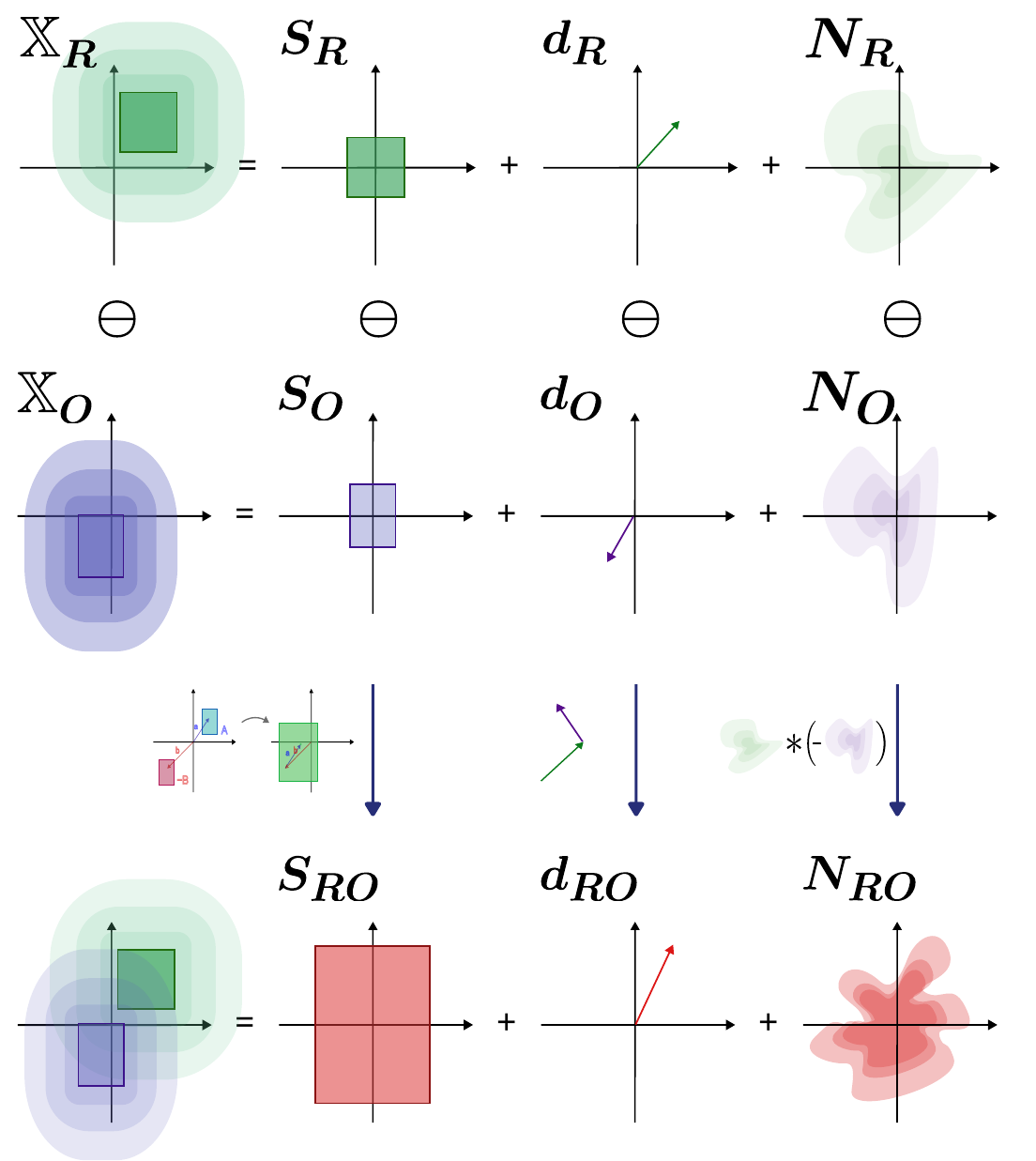}
    \caption{Graphical depiction of the decomposition of the robot and obstacle sets. The result of the Minkowski sum is also illustrated. Each object is represented by
 its shape, the vector pointing to its frame of reference, and a Random Variable describing the uncertainty in its position.  \label{fig:decompositionuncertainset}}
\end{figure}
Using conditions (\ref{eq:collmovingset}) and (\ref{eq:collmovingdist}), both derived from the set decomposition, (\ref{eq:probcollisionevent}) can be restated in as either
\vspace{-3mm}
\begin{align}
     & P(C^{[0]}) & = & \int_{S_{RO}} (f_{N_{OR}}(x)*\delta_{d_{OR}}(x))dx \nonumber                \\
     &      & = & \int_{S_{RO}} f_{N_{OR}}(x+d_{OR})dx \label{eq:probmovingdist}              \\
     & \mbox{or} &&\nonumber    \\
     & P(C^{[0]}) & = & \int_{S_{RO} + d_{RO}} f_{N_{OR}}(x)dx\,.             \label{eq:probmovingset}
\end{align}
Here $f_{N_{OR}}=f_{N_O}*f_{N_{-R}}$ is the probability density function resulting from the difference $N_O-N_R$, while $\delta_{d_{OR}}(x)$ is the Dirac delta function centered in $d_{OR}$.
Both \cref{eq:probmovingdist} and \cref{eq:probmovingset} are computed via a single integral on a defined set in contrast to \cref{eq:probcollisionevent}. 
All the remaining complexity hides inside $f_{N_{OR}}$ since, in general, computing this pdf requires a convolution operation.

This may not be an issue in the case when the convolution has closed form expression, e.g. if $R$ and $O$ are normally distributed as $R\sim\mathcal{N}(\mu_R,\Sigma_R)$, $O\sim\mathcal{N}(\mu_O,\Sigma_O)$. In that case \cref{eq:probmovingdist} and \cref{eq:probmovingset} simplify \cite{pishro-nikProbabilityStatisticsRandom2014} to
\begin{align}
     & P(C^{[0]}) & = & \int_{S_{RO}} \mathcal{N}(x|\mu_O-\mu_R,\Sigma_T)dx \label{eq:probmovingdistgauss}                     \\
     & P(C^{[0]}) & = & \int_{S_{RO} + (\mu_R - \mu_O)} \mathcal{N}(x|0,\Sigma_T)dx,             \label{eq:probmovingsetgauss}
\end{align}
where $\Sigma_T=\Sigma_R+\Sigma_O$.

\section{PROBLEM STATEMENT}
\label{sec:statement}
Having formalized the probability of a collision event in a fixed configuration in \Cref{sec:back}, we now approach the problem of defining the same quantity along a path followed by the robot. A path is a function $\mu_R(s):\mathbb{S}\to\mathbb{W}$ going from a possibly continuous ordered set of configurations $\mathbb{S}=\{s_0,s_1,\dots,s_N\}$ to the workspace.
\\
Denoting $\mathcal{W}$ as a sufficiently well-behaved family of sets in $\mathbb{W}$\footnote{For example the Borel algebra of $\mathbb{W}$, $\mathcal{B}(\mathbb{W})$.}, i.e. $w_i \in \mathcal{W} \implies w_i \subseteq \mathbb{W} $, the integration set associated with each configuration $s$ is given by the function $D_{RO}(s) : \mathbb{S} \to \mathcal{W}$,
\begin{equation}
  \label{eq:integrationsetconf}
  D_{RO}(s) = S_{RO} + (\mu_R(s)-\mu_O).
\end{equation}
We also explicitly consider the dependence on $s$ of $N_{RO}$ on $s$ by treating it as a stochastic process $N_{RO}(\omega, s) : \Omega \times \mathbb{S} \to \mathbb{W}$, where $\Omega$ is the sample space.
Similarly to condition \cref{eq:collmovingset}, we associate each configuration $s$ to the stochastic process $C(\omega, s) : \Omega \times \mathbb{S} \to \{0,1\}$,
\begin{equation}
  \label{eq:stochprocesscollision}
  C(\omega,s) : N_{RO}(\omega,s) \in D_{RO}(s),
\end{equation}
and, with a slight abuse of notation, we parametrize the stochastic collision event as
\begin{align}
  \label{eq:collisionparametrized}
  C_s & := \{ \omega \in \Omega | C(\omega,s) = 1, \ s\in \mathbb{S} \} \nonumber        \\
      & = \ \{ \omega \in \Omega | N_{RO}(\omega,s) \in D_{RO}(s), \ s\in \mathbb{S} \},
\end{align}
i.e. the set of all possible outcomes which result in a collision in $s$.
Consequently each configuration has an associated probability in the form of \cref{eq:probmovingsetgauss} as
\begin{equation}
  \label{eq:probconfigurationmoving}
  P(C_s)  =  \int_{D_{RO}(s)} \mathcal{N}(x|0,\Sigma_T)dx.
\end{equation}

In general both the robot and the obstacle dynamics can be stochastic, making the random process $N_{RO}(\omega,s)$ at each $s$ correlated to the history of past configurations. In general, to model the dynamics of such a process, the tools of It\^o Calculus \cite{speyerStochasticProcessesEstimation2008} are to be used. However in this paper we assume that the stochastic process is identically distributed, i.e. $N_{RO}(\omega,s_i)\sim \mathcal{N}(0,\Sigma_T)$.

While \cref{eq:probconfigurationmoving} is valid for an isolated configuration $s$, interactions between neighboring configurations in $\mathbb{S}$ must be accounted for if a path is considered.

Indeed what we are interested in is the value of
\begin{equation}
  \label{eq:totalprobabilityconfigurations}
  P(C) := P\left(\bigvee_{\mathbb{S}}C_s\right),
\end{equation}
which however is not directly computable due to the non trivial dependence between events \eqref{eq:collisionparametrized}.

\subsection{Assumptions}
\label{subsec:assumptions}

To model such interactions, and to compute \cref{eq:totalprobabilityconfigurations}, it is necessary to specify further hypotheses which apply to different cases.
An assumption that is commonly adopted is to consider that all collision events are independent
(e.g. \cite{vandenbergLQGMPOptimizedPath2011}).
Formally, this model can be described as
\begin{enumerate}
  \item[\textbf{(H1)}] $C_{s_i} \ \bot \ C_{s_j}, \ \forall s_i\neq s_j \in \mathbb{S},$
\end{enumerate}
leading to the following lemma.

\begin{lemma}[\textbf{H1}]
  \label{def:H1}
  The probability of collision \cref{eq:totalprobabilityconfigurations} under (H1) reduces to
  \begin{equation}
    P_{H1}(C)  =  1 - \prod_{\mathbb{S}}(1-P(C_s)), \label{eq:probH1},
  \end{equation}
  where $P(C_s)$ is \cref{eq:probconfigurationmoving}.
\end{lemma}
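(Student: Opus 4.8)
The plan is to prove the identity by passing to the complementary event and then exploiting the independence hypothesis (H1). The collision event along the path, $C=\bigvee_{\mathbb{S}}C_s$ from \cref{eq:totalprobabilityconfigurations}, is the event that \emph{at least one} of the configuration-wise collisions $C_s$ occurs; its complement is the event that \emph{none} of them occurs, i.e.\ $\overline{C}=\bigwedge_{\mathbb{S}}\overline{C_s}$. By the axioms of probability this gives
\begin{equation*}
    P(C)=1-P\!\left(\bigwedge_{\mathbb{S}}\overline{C_s}\right),
\end{equation*}
which is just De~Morgan's law applied to \cref{eq:totalprobabilityconfigurations} and requires no assumption on the $C_s$.

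First I would then invoke (H1). Independence of the events $C_s$ is equivalent to independence of the complementary family $\{\overline{C_s}\}_{\mathbb{S}}$, so the probability of the joint non-collision event factors as a product,
\begin{equation*}
    P\!\left(\bigwedge_{\mathbb{S}}\overline{C_s}\right)=\prod_{\mathbb{S}}P(\overline{C_s})=\prod_{\mathbb{S}}\bigl(1-P(C_s)\bigr),
\end{equation*}
where the last equality uses $P(\overline{C_s})=1-P(C_s)$ with $P(C_s)$ given by \cref{eq:probconfigurationmoving}. Substituting this back into the expression for $P(C)$ yields exactly \cref{eq:probH1}, which I would relabel $P_{H1}(C)$ to emphasize that the value is conditional on the modelling choice (H1) rather than being the unconditional $P(C)$.

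For a finite or countable ordered set $\mathbb{S}=\{s_0,\dots,s_N\}$ this argument is elementary and fully rigorous, the factorization being the standard product rule for independent events. The step I expect to be the main obstacle — and, I suspect, the reason the authors raise continuity issues in this section — is the meaning of $\prod_{\mathbb{S}}$ when $\mathbb{S}$ is a continuum. Mutual independence of uncountably many nontrivial events is already a delicate notion, and whenever $P(C_s)$ stays bounded away from zero over a non-negligible set of configurations, the infinite product collapses to $0$, forcing $P_{H1}(C)=1$ regardless of the path. I would therefore state and prove the lemma for a discrete sampling of the path, where the product is finite and the result is unambiguous, and treat the continuous expression either as the limit of such samplings or as a formal notation, flagging the degenerate continuum behaviour as a motivating remark rather than a claim to be established.
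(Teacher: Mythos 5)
Your argument is exactly the paper's: pass to the complementary event via De Morgan's law and then factor $P\bigl(\bigwedge_{\mathbb{S}}\overline{C}_s\bigr)$ using the independence hypothesis (H1). Your closing remark about the degenerate behaviour of the product over a continuum matches the paper's own observation (following the lemma) that this assumption is \enquote{asymptotically tautological}, so the proposal is correct and essentially identical in approach.
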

\begin{proof}
  \begin{align}
                                                                                        & P\left(\bigvee_{\mathbb{S}}C_{s}\right) = P \left(\overline{\bigwedge_{\mathbb{S}}\overline{C}_{s}}\right)=
    1 - P \left(\bigwedge_{\mathbb{S}}\overline{C}_{s}\right)\xrightarrow[\text{ind}]{} & \nonumber                                                                                                     \\
                                                                                        & \xrightarrow[\text{ind}]{} 1 - \prod_{\mathbb{S}}(1-P(C_s)),                                                &
    \label{eq:proofH1}
  \end{align}
\end{proof}
While this premise is reasonable in the discrete case, it is easy to show that it leads to incongruities as we approach a continuous path. As pointed out by \cite{jansonMonteCarloMotion2015}, the assumption is \emph{asymptotically tautological}, as the estimation of the collision probability resulting from (H1)  converges to $1$ with the cardinality of the set $|\mathbb{S}|\to \infty$. Hence with increasing the path discretization resolution, all paths would almost surely lead to collision.

To fix this issue (H1) needs to be relaxed. A less restrictive alternative to the independence hypothesis is the assumption that the collision event is Markovian: the events $C_s$ depend only on the immediately preceding past, i.e. $P(C_{s_j}|\overline{C}_{s_{j-1}},\dots,\overline{C}_{s_0})=P(C_{s_{j-1}}|\overline{C}_{s_j})$. In this case the conditioned probability is analytically unapproachable in general, and approximations about its formulations are to be introduced. For instance, the authors of \cite{patilEstimatingProbabilityCollision2012} use truncated Normal distributions to model preceding configurations being collision-free. We introduce here a different simplification: we consider that the collision could only happen on newly swept \enquote{area} in $\mathbb{W}$ at each index $s_i$. More precisely we assume that

\begin{itemize}
  \item [\textbf{(H2)}]
  \item [$\bullet$] $C_s$ has the Markov property
  \item [$\bullet$] $P(C_{s_{i+1}}|\overline{C}_{s_{i}}) = P(N_{RO}(s_{i+1})\in(D_{RO}(s_{i+1})\cap D_{RO}^C(s_{i})))$, where $D_{RO}^C(s)$ denotes the complement of the set $D_{RO}(s)$.
\end{itemize}

\begin{lemma}[\textbf{H2}]
  \label{def:H2}
  The probability of collision \cref{eq:totalprobabilityconfigurations} under (H2) reduces to
  \begin{equation}
    P_{H2}(C)  =  1 - \prod_{s_i\in \mathbb{S}}(1-P(C_{s_{i+1}\cap \overline{s}_i})), \label{eq:probH2}
  \end{equation}
  where
  \begin{align}
     & C_{s_{i+1}\cap \overline{s}_i} := \nonumber \\ &\{ \omega \in \Omega | N_{RO}(\omega,s_{i+1})\in(D_{RO}(s_{i+1})\cap D_{RO}^C(s_{i})) \}.
    \label{eq:disjointH2}
  \end{align}
\end{lemma}

\begin{proof}
  \begin{align}
                                                                                         & P\left(\bigvee_{\mathbb{S}}C_{s}\right) = P \left(\overline{\bigwedge_{\mathbb{S}}\overline{C}_{s}}\right)=
    1 - P \left(\bigwedge_{\mathbb{S}}\overline{C}_{s}\right)\xrightarrow[\text{mark}]{} & \nonumber                                                                                                                                            \\
                                                                                         & \xrightarrow[\text{mark}]{} 1 - \prod_{s_i\in\mathbb{S}}(1-P(C_{s_{i+1}}|\overline{C}_{s_{i}}))\stackrel{\text{\cref{eq:disjointH2}}}{=} & \nonumber \\
                                                                                         & \stackrel{\text{\cref{eq:disjointH2}}}{=} 1 - \prod_{s_i\in \mathbb{S}}(1-P(C_{s_{i+1}\cap \overline{s}_i})).                            & \nonumber
    \label{eq:proofH2}
  \end{align}
\end{proof}

\begin{figure}[]
  \centering
  \includegraphics[clip,width=0.9\linewidth]{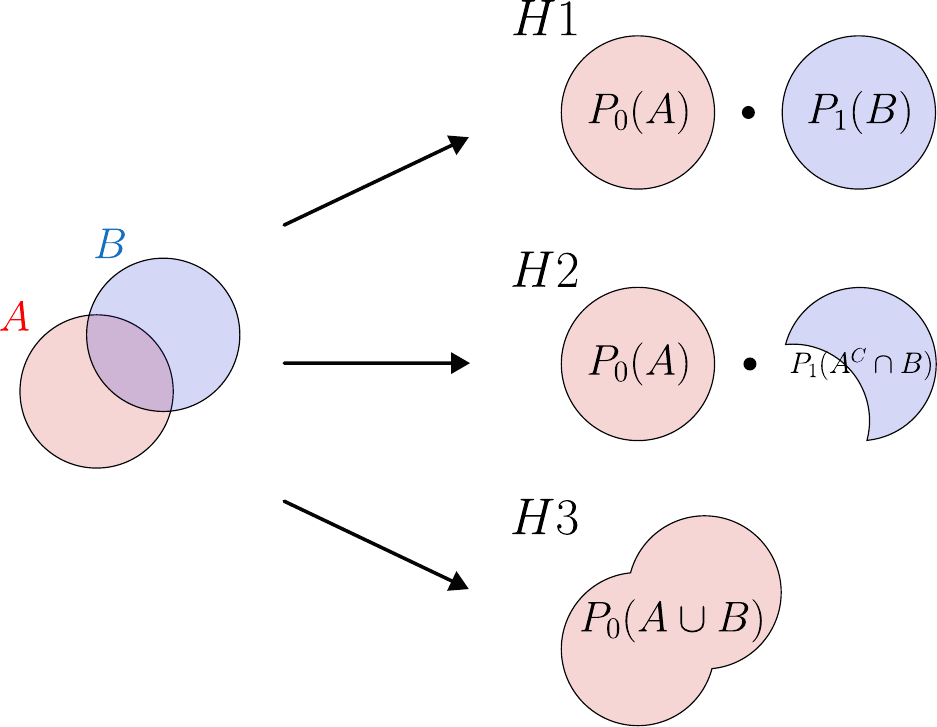}
  \caption{Visual representation of how considered assumptions work on two subsequent events. The events considered are $\{\omega | N_{RO}(s_0,\omega)\in A\}$ and $\{\omega | N_{RO}(s_1,\omega) \in B\}$, where the sets $A$ and $B$ are shown as red and blue disks. $P_0(\cdot)$ and $P_1(\cdot)$ stand respectively for the probability measure of the first and the second random variable to be inside a set.}
  \label{fig:H1H2H3}
\end{figure}
\vspace{-2mm}
An alternative model underpinning a different path integral extension of the collision probability formula stems from considering the evolution of the robot path as completely deterministic, while limiting the uncertainty to the sole knowledge of the initial obstacle and robot position.
This premise can be formally stated using the concept of stopped process \cite{gallagerStochasticProcessesTheory2013}.

\begin{enumerate}
  \item[\textbf{(H3)}] $N_{RO}(\omega,s)$ is a stopped  process  $N^{s_0}_{RO}(\omega,s) := N_{RO}(\omega,\min(s_0,s))$.
\end{enumerate}

In this case the stochastic process is constant, as $s_0$ is the first element of $\mathbb{S}$, meaning that and all the uncertainty can be represented as a single Random Variable $N^{s_0}_{RO}(\omega)$ sampled from a sample space $\Omega$.

We then introduce \cref{eq:totalprobabilityconfigurations} under (H3):
\begin{lemma}[\textbf{H3}]
  \label{def:H3}
  The probability of collision \cref{eq:totalprobabilityconfigurations} under (H3) reduces to
  \begin{equation}
    P_{H3}(C)  =  \int_{D_T} \mathcal{N}(x|0,\Sigma_T)dx, \label{eq:totalprobconfigurationmoving}
  \end{equation}
  with
  \begin{equation}
    D_T = \bigcup_{\mathbb{S}} D_{RO}(s).\label{eq:DROT}
  \end{equation}
\end{lemma}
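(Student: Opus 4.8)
The plan is to exploit the fact that, under (H3), the stopped process degenerates to a single random variable, so that the disjunction of collision events over the whole path collapses to a single membership event for one fixed Gaussian vector.

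First I would make the degeneration precise. Since $s_0$ is the first element of the ordered set $\mathbb{S}$, one has $\min(s_0,s)=s_0$ for every $s\in\mathbb{S}$, hence $N^{s_0}_{RO}(\omega,s)=N_{RO}(\omega,s_0)=:N(\omega)$ does not depend on $s$. By the identically-distributed assumption stated above, $N\sim\mathcal{N}(0,\Sigma_T)$. Consequently every parametrized event from \eqref{eq:collisionparametrized} rewrites as $C_s=\{\,N(\omega)\in D_{RO}(s)\,\}$, with one and the same random variable $N$ appearing for all $s$.

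The key step is the passage from the disjunction of events to the union of the integration sets. Because all events $C_s$ are driven by the same realization $N(\omega)$, the elementary identity $\bigcup_{s}\{N\in D_{RO}(s)\}=\{N\in\bigcup_{s}D_{RO}(s)\}$ applies, so that
\[
\bigvee_{\mathbb{S}} C_s
=\left\{\, N(\omega)\in\bigcup_{s\in\mathbb{S}}D_{RO}(s)\,\right\}
=\{\,N(\omega)\in D_T\,\},
\]
with $D_T$ as in \eqref{eq:DROT}. Taking probabilities and using that $N$ admits the density $\mathcal{N}(\cdot|0,\Sigma_T)$ then gives
\[
P_{H3}(C)=P(N\in D_T)=\int_{D_T}\mathcal{N}(x|0,\Sigma_T)\,dx,
\]
which is exactly \eqref{eq:totalprobconfigurationmoving}.

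The main subtlety I anticipate is not the algebra but the measurability of $D_T$ when $\mathbb{S}$ is a continuum, since the union is then taken over uncountably many sets. I would dispose of this by observing that $D_T$ is the sweep of the fixed compact shape $S_{RO}$ along the continuous curve $s\mapsto\mu_R(s)-\mu_O$, i.e. the image of the continuous map $(s,y)\mapsto y+\mu_R(s)-\mu_O$ on $\mathbb{S}\times S_{RO}$; for a compact parameter set this image is compact and hence Lebesgue measurable, so the final integral is well defined. Beyond this point, the result is a direct consequence of the constancy of the stopped process, which is precisely the feature that distinguishes (H3) from (H1) and (H2): here no product over configurations appears, because the single underlying uncertainty cannot create fresh, independent collision opportunities at successive stages.
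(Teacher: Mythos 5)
Your proof is correct and follows essentially the same route as the paper: under (H3) the stopped process is constant, so the disjunction of collision events collapses to a single membership event $\{N_{RO}(\omega,s_0)\in\bigcup_{\mathbb{S}}D_{RO}(s)\}$, whose probability is the Gaussian integral over $D_T$. Your additional remark on the measurability of the uncountable union (via compactness of the sweep map's image) addresses a point the paper leaves implicit, but it does not change the argument.
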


\begin{proof}
  $\bigvee_{\mathbb{S}}C_s = \bigvee_{\mathbb{S}} \{ N_{RO}(\omega,s) \in D_{RO}(s) \} = \bigvee_{\mathbb{S}} \{ N_{RO}(\omega,s_0) \in D_{RO}(s) \} = \{ N_{RO}(\omega,s_0) \in \bigcup_{\mathbb{S}} D_{RO}(s) \} $,
  of which the probability is \cref{eq:totalprobconfigurationmoving}.
  \label{eq:proofH3}
\end{proof}

Note that \cref{def:H3} is a specialization of \cref{eq:probmovingsetgauss} on the set $D_T$.
Moreover, notice how by swapping $D_T$ with the union of the $D_{RO}(s)$ over any contiguous subset of $\mathbb{S}$, i.e. $D_{RO}(s_1,s_2) = \bigcup_{s=s_1}^{s_2} D_{RO}(s)$, we obtain the probability of the related subpath. In this sense the \enquote{Probability To Go} is computed by integrating \cref{eq:totalprobconfigurationmoving} over $D_{RO}(s,s_N)$.

While hypotheses such as (H1), (H2) and (H3), shown in \cref{fig:H1H2H3}, are mathematical abstractions, these tightly connect to the assumptions one can draw from the particular scenario.
Indeed, (H1) can model fairly well an obstacle whose position changes over time with a dynamics that is much faster than that of the robot, but occupies, on average, the same regions of space with a fixed frequency: think of obstacles such as a stream of people, or falling debris, or droplets of water.
On the other hand (H3) would more accurately model a scenario where, for example, the obstacle is a pillar whose position is fixed but uncertain.
Finally, hypothesis (H2) is the more involved assumption. This hypothesis models scenarios where collisions can occur only in the additional area covered at the last step taken. For example, consider a robot boat navigating through a minefield in the sea. It is reasonable to assume that collisions can only happen in front of the boat. However, a return trip on the same path would not be automatically safe because the mines bob with the waves and slowly change their, already uncertain, positions around their respective means.
A stylized depiction of a few of these example scenarios is shown in \Cref{fig:scenarios}.

Notice that the complete set of possible scenarios form a somehow continuous spectrum in which (H1) and (H3) lie at two opposite extremes: (H1) is total independence between configurations, while (H3) denotes \enquote{total dependence}. Multiple different modelling assumptions about the uncertainty can be found in the range in between (H1) and (H3), such as (H2).
We show in simulation in \Cref{sec:numval} how these hypotheses influence probability estimates in the same simple scenario.
\begin{figure}
  \centering
  \includegraphics[clip,width=0.9\linewidth]{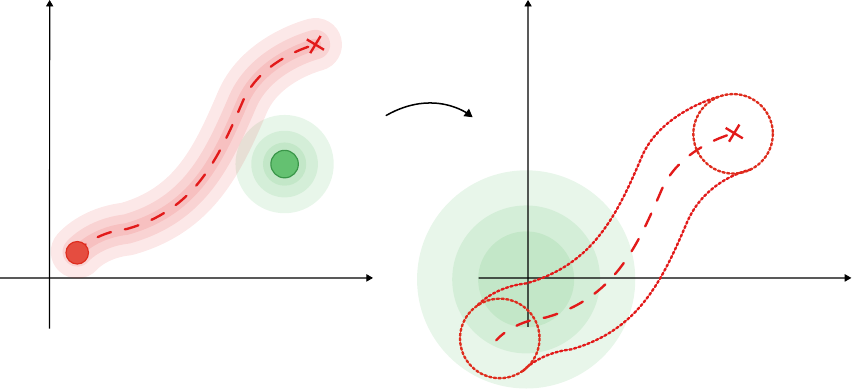}
  \caption{Interpretation of \cref{eq:totalprobconfigurationmoving}. The problem is moved from the initial domain $\mathbb{W}$ to the domain of the difference between the robot and the obstacle shape. This results from the transformation shown in \cref{fig:decompositionuncertainset}, which moves all the uncertainty on a single distribution centered in zero, while the integration set is $D_T$.\label{fig:tubedifference}}
\end{figure}
\vspace{-2mm}
\subsection{Continuous Path Parametrization}
In the case in which the set $\mathbb{S}$ is continuous, one has
either to evaluate a union of the uncountable number of sets spanned by the robot moving along the trajectory, \cref{eq:DROT}, or to compute the product over an uncountable number of indices. Computing the union or the product turns out to be unpractical, or even impossible in closed form. The problem is, for a path parametrized as $\mu_R(s): [0,1] \to \mathbb{W}$, to find an  expression of ${D}_{RO}(s)$ and $D_T$ \cref{eq:DROT}.
For simplicity let us assume henceforth that $\mu_R(s)$ is almost everywhere differentiable map and $\mathbb{W} = \mathbb{R}^2$. Consider two disks $S_R$ and $S_O$ outer-bounding the shape of the objects.
Then, the combined shape $S_{RO}$ is a disk whose radius is the radii sum.
Here $D_T$ takes the shape of a rounded tube, drawn by the large disk being swept along the trajectory, as shown in \cref{fig:tubedifference}.

The characterization of such a swept space is generally rather complex.
Recently, that characterization has been addressed by \cite{demont-marinMinimumSweptvolumeMetric2022} with nontrivial mathematical methods, and focusing on local aspects.

Suppose a fully diffeomorphic map can parameterize the swept space
\begin{equation}
  \label{eq:generaltubeapprox}
  \Phi(s,\theta):[0,1]\times[-T,T]\to \mathbb{W}\,,
\end{equation}
where $s\in [0,1]$ is a line variable defined along the path $\mu_R(s)$ through which the disk is dragged, and $\theta\in[-T,T]$ is a characteristic length associated with the shape of the robot. Now, using \cref{eq:generaltubeapprox},  \cref{eq:totalprobconfigurationmoving} becomes
\begin{equation}
  \label{eq:probabilityRVparam}
  P^a_{H3}(C)=\int_0^1\int_{-T}^{T} \mathcal{N}(\Phi(\gamma,\nu)-\mu_O|0,\Sigma_T)\Gamma(\gamma,\nu) d\nu d\gamma\;,
\end{equation}
where $\Gamma(\gamma,\nu) = |\det (\nabla\Phi(\gamma,\nu))|$.

As explored in \cite{paiolaConsiderationsPossibleApproaches2022}, \cref{eq:generaltubeapprox} can be used also to compute \cref{eq:probH2} under (H2). Indeed, using the parametrization and adhering to \cref{eq:disjointH2}, the conditional probability in the continuous case can be rewritten as
\begin{align}
   & P^a(C_{s+ds}|\overline{C}_s) \nonumber                                                                                \\
   & = \int_s^{s+ds} \int_{-T}^{T} \mathcal{N}(\Phi(\gamma,\nu)-\mu_O|0,\Sigma_T)\Gamma(\gamma,\nu) d\nu d\gamma \nonumber \\
   & = \int_{-T}^{T} \mathcal{N}(\Phi(s,\nu)-\mu_O|0,\Sigma_T)\Gamma(s,\nu) d\nu ds.
  \label{eq:conditionalslice}
\end{align}
Plugging this in \cref{eq:probH2} results in a Volterra integral \cite{slavikProductIntegrationIts2007}, leading to
\begin{align}
   & P^a_{H2}(C) = \nonumber \\ &1-\exp(-\int_0^1\int_{-T}^{T} \mathcal{N}(\Phi(\gamma,\nu)-\mu_O|0,\Sigma_T)\Gamma(\gamma,\nu) d\nu d\gamma)
  \label{eq:volterraprob}
\end{align}

The problem with this approach is to find a parametrization diffeomorphic to $\mathbb{W}$, which is generally not trivial.

Consider indeed the na\"ive parametrization
\begin{equation}
  \label{eq:tubeparametrizationexample}
  \Phi(s,\theta) = \begin{bmatrix}
    \mu_{R_x}(s) - \theta \frac{d\mu_{R_y}}{ds}(s) \\
    \mu_{R_y}(s) + \theta \frac{d\mu_{R_x}}{ds}(s)
  \end{bmatrix}.
\end{equation}
In this case the curve parametrization $\mu_R(s)$ is augmented by the variable $\theta$, which indicates the \enquote{off-track} parameter, i.e. the locally orthogonal deviation from $\mu_R(s)$. The boundary of the domain of $\theta$ is given by the parameter $T$, which for the space swept by an object of circular shape amounts to the object radius $r$ (in our case, as the circle is generated by two objects, $r=r_r+r_O$).

Note that \cref{eq:tubeparametrizationexample} is not a diffeomorphic map between the parametrization space and $\mathbb{W}$, indeed it is only homeomorphic locally in a neighborhood around $T=0$, unless the trajectory is a straight line. Therefore, whenever $\mu_R(s)$ intersects itself or its curvature is not null, some region of the integration set $D_T$ will be considered multiple times\footnote{\Cref{eq:probabilityRVparam} and \cref{eq:volterraprob} only require $\Phi(s,\theta)$ to be differentiable almost everywhere w.r.t. to $s$ and $\theta$.}.
Due to those issues, the double integral \cref{eq:probabilityRVparam} does not always give a good approximation of \cref{eq:totalprobconfigurationmoving}.
Besides that, evaluating \cref{eq:probabilityRVparam} can still be relatively expensive to compute.
These problems spawn from the continuity of the path as, even under (H3), defining precisely the infinite set of unions $D_T$ for \cref{eq:totalprobconfigurationmoving} is, in general, analytically convoluted and computationally challenging.

We refer to the probability approximation computed by the integral \cref{eq:probabilityRVparam} with parametrization \cref{eq:tubeparametrizationexample} as \emph{Na\"ive Set Parametrization Approach}.

\section{CURRENT COMPUTATIONAL APPROACHES}
\label{sec:problems}
As already discussed in \Cref{sec:intro}, different approaches in literature dealt with the estimation of collision probability along a continuous path.
Here we retrace how Grid-Based and Stage-Wise approximation methods deal with this problem, highlighting some of their drawbacks.

\subsection{Grid-Based Methods}
In their simplest form, Bayesian Grids \cite{elfesUsingOccupancyGrids1989} consist in a tessellation of the environment and a set of Random Variables $\mathbf{m}$. Each element of $\mathbf{m}$, $m_i$, is a binary Random Variable representing the occupancy of the tile in the discretized space with associated probability $P(m_i)\triangleq P(m_i=1)$.
With an implicit abuse of notation, we will indicate with $m_i$ both the occupancy of the tile, and the tile itself.
Assuming the hypothesis, similar to (H1), that each cell forming the grid is independent, enables the real-time capabilities of this approach, as the value of each single cell can be updated without considering the joint probability function of the entire map.
Specifically, the quantity updated is the belief (or posterior) probability of occupancy $P(m_i|z_{1:t},y_{1:t})$, given both the set of the robot's measurements $z_{1:t}=\{z_1,z_2,\dots,z_t\}$ and the set of its poses $y_{1:t}=\{y_1,y_2,\dots,y_t\}$ up to time $t$. The value of the cells updates according to Bayes Rule\footnote{In practice, the log-odds form of this relation is used, as it has better computational performances.}

\begin{equation}
  P(m_i|z_{1:t},y_{1:t}) = \frac{P(z_t,y_t|m_i)P(m_i|z_{1:t-1},y_{1:t-1})}{P(z_t,y_t|z_{1:t-1},y_{1:t-1})}.
  \label{eq:bayesbelief}
\end{equation}

After each cell is updated at some fixed time $t$, the independence assumption is used to compute the probability of collision along the path followed by the robot.

Denote the set of the tiles crossed by the robot as $\mathbf{m}_c$ and the collision event $m_i=1$ as $C_i$ (of which the belief is \cref{eq:bayesbelief}).
Considering independent events, the probability of collision in the grid case is
\begin{equation}
  \label{eq:gridprobbelief}
  P_g(C) =  1 - \prod_{m_i\in\mathbf{m_c}} (1- P(m_i|z_{1:t},y_{1:t})).
\end{equation}

As noted by \cite{laconteLambdaFieldContinuousCounterpart2019} and others, one of the issues of this approach is that in no way the area of the tiles is taken into consideration, as the belief probability gets updated using only the sensor readings at time $t$. Therefore, the discretization resolution influences the probability of collision computed by the method, as shown in \cref{fig:gridissue}.
\begin{figure}
  \centering
  \includegraphics[width=0.9\linewidth, trim= 0 -2mm 0 0 ]{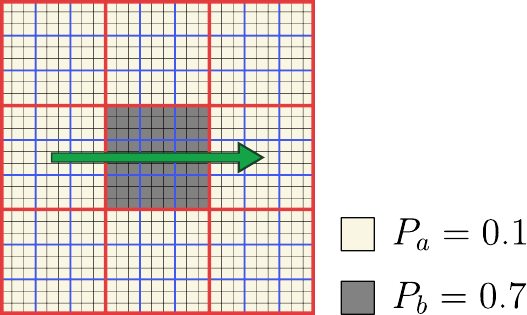}
  \begin{tabular}{l|c|c|c}
                     & \color{Red}Red                      & \color{RoyalBlue}Blue                 & \color{Black}Black                      \\
    \hline
    $|\mathbf{m_c}|$ & $3$                                 & $7$                                   & $19$                                    \\
    \hline
    $P_g(C)$         & $1-\overline{P}^2_a \overline{P}_b$ & $1-\overline{P}^4_a \overline{P}^3_b$ & $1-\overline{P}^{10}_a\overline{P}^9_b$ \\
                     & =0.757                              & =0.982\dots                           & =0.999\dots
  \end{tabular}
  \caption{Example of discretization choice effect on the computed probability by grid-based methods. The path crossed by the robot (green) lives in $\mathbb{R}^2$, and $3$ distinct tessellations of the environment are given (red, blue, black). The robot sensors measure different probability values \cref{eq:bayesbelief} for the cells depending on the image's color. The probability computed using \cref{eq:gridprobbelief} assumes different values depending on the discretization, although the underlying environment is the same. Indeed, the probability changes as a function of the cardinality of the set $\mathbf{m_c}$. \label{fig:gridissue}}
  \centering
\end{figure}
Even if we circumvent this issue and compute $P(m_i)$ instead of its belief \cref{eq:bayesbelief}, the independence assumption and the multiplicative formula \cref{eq:gridprobbelief} would lead to incorrect conclusions, as we show numerically in \Cref{sec:discussion}.

\subsection{Stage-wise Probability Estimation Methods}

\begin{figure*}
  \centering
  \begin{subfigure}{0.35\textwidth}
    \centering
    \includegraphics[clip,width=\textwidth]{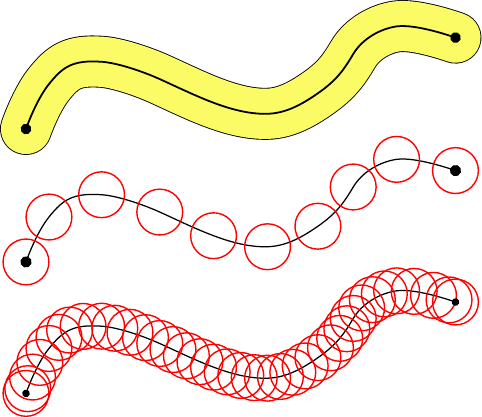}
    \caption{Joint chance constraint approximation.\label{sfig:jointcc}}
  \end{subfigure}
  \hfil
  \begin{subfigure}{0.45\textwidth}
    \centering
    \includegraphics[clip,width=\textwidth]{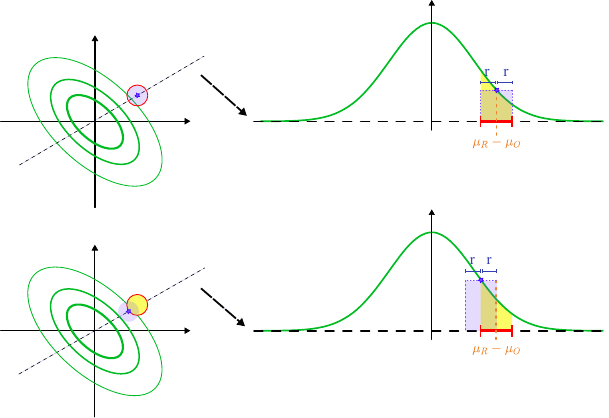}
    \caption{Probability approximations for Gaussian distributions.\label{sfig:singlecc}}
  \end{subfigure}
  \caption{Left: Example of under and over discretization of a continuous path using Boole's Lemma. Red circles represent the considered robot configurations. The swept area $D_{T}$, in yellow, is misrepresented, whether by selecting a set of configurations that is too sparse, leaving out much of the event, or too dense, parametrizing intersections multiple times. Right: Depiction of the approaches used by authors of \cite{dutoitProbabilisticCollisionChecking2011} (top) and \cite{parkFastBoundedProbabilistic2018} (bottom) to approximate Gaussian chance constraints. The boundary of the robot is in red. The approximation integration set is shaded in purple, while the true probability is in yellow.\label{fig:CCissue}}
\end{figure*}
Chance constraints are a popular framework used in optimization to address the risk of collision.
Specifically, chance constraints are inequalities involving probabilities in the form of $P(C)\leq \delta$.

To convert individual chance constraints to scalar inequalities, either an approximation or a bound on the probability of the event can be used. The authors of \cite{dutoitProbabilisticCollisionChecking2011} consider both the obstacle and the robot as normally distributed objects of spherical shape and give an approximation of the probability of collision \cref{eq:probmovingsetgauss} as
\begin{equation}
  \label{eq:dutoitapprox}
  P(C) \approx V_S \frac{1}{2 \pi \sqrt{\det(\Sigma_T)}} \exp{-\frac{1}{2}(\mu_R-\mu_O)^T \Sigma_T^{-1} (\mu_R-\mu_O)},
\end{equation}
where the variables used are the ones  introduced in \Cref{sec:back}, and $V_S$ is the area of the combined disk, in $\mathbb{R}^2$ $V_S=(r_o+r_r)^2 \pi$. The method amounts to taking the value of $\mathcal{N}(x|0,\Sigma_T)$ at $x=\mu_R-\mu_O$, the center of the combined disk obtained as described in \cref{sec:back}, and multiplying it for $V_s$.
While this approximation is efficiently evaluated and, as such, can be implemented in dynamic planning methods \cite{dutoitRobotMotionPlanning2012}, its value is not assured to upper-bound the actual probability.
An alternative to this is proposed in \cite{parkFastBoundedProbabilistic2020} where an optimization problem is set up to compute $x_{max}$, the point inside the combined disk that maximizes $\mathcal{N}(x|0,\Sigma_T)$, and then multiplying it for $V_s$: this always bounds the true probability from above, as shown in \cref{sfig:singlecc}.

Whenever the event $C$ (i.e. the collision along a continuous path) to be bounded in probability is constituted by the interaction of multiple events, joint chance constraints are used. At the foundation  of this framework lies the idea of approximating the probability of collision \cref{eq:totalprobabilityconfigurations} along a continuous path with the probability of the union of some discretized set of configurations $\mathbb{I}\subset \mathbb{S}$ (waypoints) along the path,
\begin{equation}
  P(C) \approxeq P\left(\bigvee_{i\in  \mathbb{I}} C_i \right).
  \label{eq:ccapprox}
\end{equation}
A bound on the right-hand side of \cref{eq:ccapprox} is called Joint Chance Constraint. Being Joint chance constraints computationally unapproachable, even in a discrete scenario, relaxation techniques have been developed. Most of these techniques rely on Boole's Lemma \cref{eq:booleslemma},
\begin{equation}
  \label{eq:booleslemma}
  P\left(\bigvee_{i\in  \mathbb{I}} C_i\right) \leq \sum_{i\in  \mathbb{I}} P(C_i),
\end{equation}
which over-bounds the probability of the union of the events with the sum of their individual probabilities. We refer to this method as \emph{Stage-Wise Approximation}.

Unfortunately, \cref{eq:ccapprox} does not tell how its two sides relate, meaning that the bound on $P\left(\bigvee_{i\in \mathbb{I}} C_i\right)$ could be invalid for $P(C)$. Indeed if the discretization chosen is too sparse, then the left-hand side of \cref{eq:booleslemma} will describe only a subset of the wanted event set, and the probability computed could be a lower approximation of the true value $P(C)$, as there is no accounting of the possibility of a collision happening between waypoints. Conversely, by choosing an excessively dense discretization, the gap between the bound and the true value of the probability grows, as shown in \cref{sfig:jointcc}.

In practice, this approach tends to be very conservative. When used for trajectory optimization, it rejects possibly feasible configurations \cite{freyCollisionProbabilitiesContinuousTime2020}.

\section{RISK DENSITY}
\label{sec:approach}
Following the analysis of collision probability along a continuous path in \Cref{sec:statement}, we investigate how the seemingly very different assumptions (H2) and (H3) relate.

Indeed consider the sensitivity to the parameter $T$, denoting the bounds of the variable $\theta$, of both \cref{eq:probabilityRVparam} and \cref{eq:volterraprob} with parametrization \cref{eq:tubeparametrizationexample}:
\begin{align}
     &S_r^{H2}(\mu_R,\overline{T}):=\frac{dP^a_{H2}}{dT}\Big |_{T=\overline{T}},\label{eq:rdH2} \\
     &S_r^{H3}(\mu_R,\overline{T}):=\frac{dP^a_{H3}}{dT}\Big |_{T=\overline{T}}.\label{eq:rdH3}
\end{align}
Notice the explicit dependence of \cref{eq:rdH2} and \cref{eq:rdH3} on the trajectory under scrutiny $\mu_R(s)$ and the dimensional parameter $T$, as their knowledge implies the parameterization \cref{eq:tubeparametrizationexample} .
\begin{lemma}
    The sensitivity under (H2) and under (H3) are equivalent in $T=0$, i.e.
    \begin{equation}
        \label{eq:lemmariskdensity}
       S_r^{H2}(\mu_R,0) = S_r^{H3}(\mu_R,0) %
    \end{equation}
    \label{lem:riskdensity}
\end{lemma}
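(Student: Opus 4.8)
The plan is to exploit the fact that the two probability expressions \cref{eq:probabilityRVparam} and \cref{eq:volterraprob} are built from one and the same double integral. First I would introduce the shorthand
\[
J(T) := \int_0^1\int_{-T}^{T} \mathcal{N}(\Phi(\gamma,\nu)-\mu_O|0,\Sigma_T)\,\Gamma(\gamma,\nu)\, d\nu\, d\gamma,
\]
so that $P^a_{H3}(C)=J(T)$ and $P^a_{H2}(C)=1-\exp(-J(T))$. In this notation the entire difference between the two models is just the outer scalar function applied to the common kernel $J$, which makes the comparison of their derivatives transparent.

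Next I would differentiate both with respect to $T$ by the chain rule. For (H3) one obtains directly $S_r^{H3}(\mu_R,\overline{T})=J'(\overline{T})$, while for (H2) the outer exponential contributes a multiplicative factor, giving $S_r^{H2}(\mu_R,\overline{T})=\exp(-J(\overline{T}))\,J'(\overline{T})$. Hence the two sensitivities agree at a value $\overline{T}$ precisely when $\exp(-J(\overline{T}))=1$, i.e. when $J(\overline{T})=0$, and the whole claim reduces to verifying this condition at $\overline{T}=0$.

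The last step is to evaluate $J$ at $\overline{T}=0$. Because the inner integral runs from $-T$ to $T$, at $T=0$ the integration interval degenerates to a single point, so $J(0)=0$ and therefore $\exp(-J(0))=1$. Substituting back yields $S_r^{H2}(\mu_R,0)=J'(0)=S_r^{H3}(\mu_R,0)$, which is exactly \cref{eq:lemmariskdensity}. The only point requiring care is that $J'(0)$ be well defined, for which I would invoke the Leibniz rule for differentiation under the integral sign with variable endpoints, justified by the smoothness of the diffeomorphic map $\Phi$ and of the Gaussian density $\mathcal{N}(\cdot|0,\Sigma_T)$. I expect this differentiability check to be the main obstacle, though a routine one: conceptually the identity is immediate once both probabilities are written as functions of the shared integral $J(T)$ and one observes that $J$ vanishes at $T=0$, so that the distinguishing factor $\exp(-J)$ collapses to unity there.
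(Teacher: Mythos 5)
Your proposal is correct and follows essentially the same route as the paper: both apply the Leibniz rule to the shared double integral and observe that the (H2) sensitivity equals the (H3) sensitivity times $\exp(-J(\overline{T}))$, which collapses to unity at $T=0$ because the inner integration interval degenerates. The only difference is that the paper additionally evaluates the common value $J'(0)$ explicitly to obtain the risk density formula, which your argument omits but which is not needed for the equality itself.
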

\begin{proof}
    The risk sensitivity $S_r^{H2}(\mu_R,\overline{T})$ and $S_r^{H3}(\mu_R,\overline{T})$ can be computed by the Liebniz Integral rule as
    \begin{align}
    \label{eq:sensH3}
        S_r^{H3}(\mu_R,\overline{T}) =&\int_{0}^{1} \left( \mathcal{N}(\gamma,\nu) \Gamma(\gamma,\nu) \right )\Big |_{\nu=\overline{T}} d\gamma+ \nonumber \\  &\int_{0}^{1} \left(\mathcal{N}(\gamma,\nu) \Gamma(\gamma,\nu)\right)\Big |_{\nu=-\overline{T}} d\gamma,
    \end{align}
    \begin{align}
    \label{eq:sensH2}
        &S_r^{H2}(\mu_R,\overline{T}) = \nonumber \\&S_r^{H3}(\mu_R,\overline{T}) \exp \left(-\int_{0}^1 \int_{-\overline{T}}^{\overline{T}} \mathcal{N}(\gamma,\nu) \Gamma(\gamma,\nu) d\gamma d\nu\right),
    \end{align}
    where $\mathcal{N}(\gamma,\nu) := \mathcal{N}(\Phi(\gamma,\nu) - \mu_O|0,\Sigma_T)$.
    In $T=0$, $\Gamma(\gamma,0) = |\nicefrac{d\mu_R(s)}{d\gamma}|$, and the preceding integrals both evaluate to 
    \begin{equation*}
        2\int_{0}^{1} \mathcal{N}(\mu_R(\gamma)-\mu_O|0,\Sigma_T)\left|\frac{d \mu_R}{d \gamma}\right| d\gamma,
    \end{equation*}
    where $|\cdot|$ is the 2-norm.
\end{proof}
The result is a functional taking a path parametrization $\mu_R$ and giving a positive scalar as an output. 
We refer to this map as the \emph{Risk Density} along $\mu_R$, namely
\begin{equation}
    r_d(\mu_R(\cdot)) = 2\int_{0}^{1} \mathcal{N}(\mu_R(\gamma)-\mu_O|0,\Sigma_T)\left|\frac{d \mu_R}{d \gamma}\right| d\gamma.
    \label{eq:riskmeasure}
\end{equation}

Within the limits of the parametrization choice \cref{eq:tubeparametrizationexample},  \cref{lem:riskdensity} indicates that (H2) and (H3) converge in the limit to the same formulation. The sensitivity of the collision probability to small changes in the robot-obstacle dimensions is the same in apparently very different conditions, i.e. the case when only the initial position of the robot/obstacle is uncertain, or when the obstacle can appear randomly in front of the robot.

Besides the interesting theoretical result that the concept of \emph{Risk Density} reconciles two very far points in the range of possible assumptions about collision interaction, we argue that \cref{eq:riskmeasure} also has more practical uses. In the rest of this paper we will show how well an approximation in the form 
\begin{equation}
    \label{eq:taylorwrtT}
        P_a(C,T) \simeq rd(\mu_R(\cdot)) T,
\end{equation}
can estimate collision probabilities.

\begin{remark}
    Notice how our derivation assumes a 2D Euclidean workspace and limits the robot's transformations to translations. The assumption on the workspace dimensions is for simplicity sake and 3D formulations of \cref{eq:probabilityRVparam}, \cref{eq:volterraprob}, \cref{eq:lemmariskdensity} and \cref{eq:taylorwrtT}  are obtained in \cref{app:A}. We leave the analysis of the collision probability with rotations to future work.
\end{remark}
\subsection*{Risk Density for multiple Obstacles}
To compute the equivalent of \cref{eq:riskmeasure} for an environment where multiple obstacles $O_i$ live, we first introduce an assumption on the interaction between the collision events w.r.t. different obstacles.
Namely we assume
\begin{equation}
    \label{eq:independentobsta}
    P(C|O_i) \perp\!\!\!\perp P(C|O_j), \forall \  i \not= j,
\end{equation}
i.e. that whether the robot collides with an obstacle or not does not affect the probability of collision with another obstacle.
\Cref{eq:independentobsta} let us write the total probability of collision w.r.t. a set of obstacles $\mathbb{O}$ as\footnote{We derive here the expression for multiple obstacles under \textbf{(H2)}, but the procedure remains unvaried under \textbf{(H2)}.} 
\small
\begin{align}
    \label{eq:multipleobsta}
    &P(C)= 1- \prod_{O_i\in \mathbb{O}} \left( 1- P(C|O_i) \right) \nonumber \\
    &= 1- \prod_{i\in \text{ind}(\mathbb{O})} \left( 1 - \left( 1 - \exp  \left( - \int_{-T_i}^{T_i} \int_0^1 \mathcal{N}_i(\gamma,\nu) \Gamma_i(\gamma,\nu) \gamma \nu  \right) \right) \right) \nonumber \\
    &= 1- \prod_{i\in \text{ind}(\mathbb{O})} \exp  \left( - \int_{-T_i}^{T_i} \int_0^1 \mathcal{N}_i(\gamma,\nu) \Gamma_i(\gamma,\nu) \gamma \nu  \right)
\end{align}
\normalsize
where $T_i$ are the radii of $O_i$, while $\mathcal{N}_i(\gamma,\nu) = \mathcal{N}(\Phi(\gamma,\nu) - \mu_{Oi}|0,\Sigma_{Ti})$ and $\Gamma_i(\gamma,\nu) = |\det (\nabla \mathcal{N}_i)| $. Define the vector $\mathbb{T} = [T_1,\dots, T_i,\dots]^\top$ and the shorthand
\begin{equation}
    \label{eq:shorthand}
    \overline{P}_i := \exp  \left( - \int_{-T_i}^{T_i} \int_0^1 \mathcal{N}_i(\gamma,\nu) \Gamma_i(\gamma,\nu) \gamma \nu  \right).
\end{equation}
Taking the sensitivity of \cref{eq:multipleobsta} w.r.t. $\mathbb{T}$ is trivial by using the derivative product rule
\begin{align}
    \label{eq:productrule}
    &\nabla_{\mathbb{T}} (\ref{eq:multipleobsta}) = - \sum_{i\in \text{ind}(\mathbb{O})}\left( \left( \prod_{j\in \text{ind}(\mathbb{O}), j\ne i}  \overline{P}_j \right) \nabla_{\mathbb{T}}\overline{P}_i \right) \nonumber \\
    &= - \sum_{i\in \text{ind}(\mathbb{O})}\left( \left( \prod_{j\in \text{ind}(\mathbb{O}), j\ne i}  \overline{P}_j \right) \left[ 0, \dots 0, \frac{d\overline{P}_i}{dT_i}, 0, \dots \right]^\top  \right).
\end{align}
Evaluating \cref{eq:productrule} in $\mathbb{T} = \overline{0}$, as each $\overline{P}_j(T_j=0) = 1$ and $\frac{d\overline{P}_i}{dT_i} = -rd_{O_i}(\mu_R)$, results in the vector
\begin{equation}
    \label{eq:multipleRD}
    rd_{\mathbb{O}}(\mu_R) = 
    \begin{bmatrix}
    \vdots\\
    rd_{O_i}(\mu_R)\\
    \vdots
    \end{bmatrix}.
\end{equation}
Moreover we can recover the familiar approximation \cref{eq:taylorwrtT} as 
\begin{equation}
\label{eq:rdmultipleproof}
    P_a(C,\mathbb{T}) = \sum_{i \in \text{ind}(\mathbb{O})} rd_{O_i}(\mu_R) T_i = rd_{\mathbb{O}}(\mu_R)^\top \mathbb{T}.
\end{equation}

\section{NUMERICAL COMPARISON}
\label{sec:numval}
\subsection*{Setup}
\label{subsec:setup}
To analyze the proprieties of the proposed function, we introduce a simple case study, compatible with the assumptions of \Cref{sec:back} and (H3), which, despite its simplicity, captures some of the problem richness.

A robot vehicle shaped as a disk of radius $r_r=0.05$, must navigate a room with a flat floor with a round obstacle of radius $r_O=0.05$, roughly located in the middle of the room.
Both the robot initial position and the obstacle location are known with some uncertainty that we can model as two independent Gaussian distributions $N_R\sim\mathcal{N}(0,\Sigma_R)$, $N_O\sim\mathcal{N}(0,\Sigma_O)$.
We want to evaluate the collision probability along three different paths $\mu_A,\ \mu_B,$ and $\mu_C$, all starting from the origin $x_0$ and finishing $x_F=\left[\begin{smallmatrix}5\\0\end{smallmatrix}\right]$, while the nominal position of the obstacle is $\mu_O = \left[\begin{smallmatrix}\nicefrac{5}{2}\\0\end{smallmatrix}\right]$. The first trajectory, $\mu_A$, is a straight line crossing the nominal position of the obstacle, $\mu_B$ and $\mu_C$ are parabolas of which the former is designed to graze the obstacle in the nominal position, while the latter passes close to it albeit always staying further than $r$.
\begin{figure}
    \centering
    \includegraphics[trim={3.5cm 6cm 3.5cm 5cm},clip,width=0.95\linewidth]{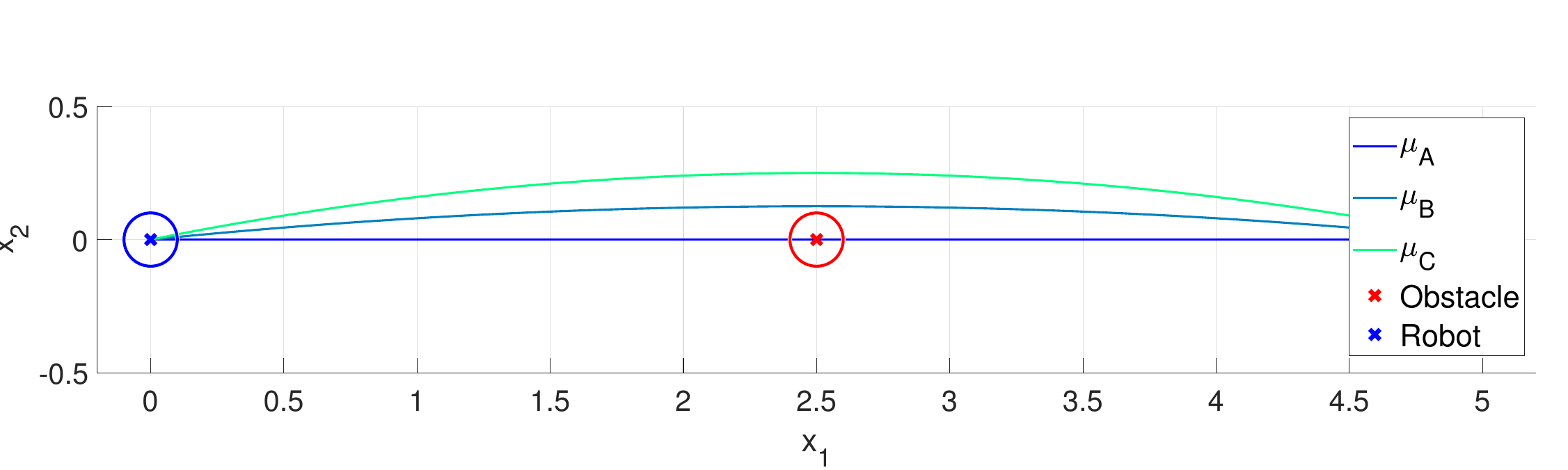}
    \caption{Case in study: nominal case. Close-up of the set-up with the three parabolic trajectories $\mu_A,\mu_B,\mu_C$. The trajectories are parametrized as $\left\{ \left[\begin{smallmatrix} s\\0 \end{smallmatrix}\right], \left[\begin{smallmatrix} s\\-\nicefrac{s^2-s}{2} \end{smallmatrix}\right], \left[\begin{smallmatrix} s\\ s -s^2 \end{smallmatrix}\right]  \right\}$. The initial condition $x_0$ is in the origin, while $x_F = \left[\begin{smallmatrix} 5\\0 \end{smallmatrix}\right]$. The robot and the obstacle are disks with radius $r_{r/O}=0.05$.  \label{fig:threetrajectories}}
\end{figure}
We look at how the probability value, computed with different approaches, varies with regard to the magnitude of the covariance matrix $\Sigma_T$.
To look at the variation of this parameter we multiply a identity matrix by a scalar value $\sigma$ so that $\Sigma_T = I \cdot \sigma$; this parameter takes $10$ values spaced on a logarithmic scale in the range $(10^{-3},1)$.
We emphasize that $\Sigma_T$ is a simulation parameter fixing the combined robot-obstacle uncertainty. The choice of its value, in the context of this experiment, is arbitrary and imposes an isotropic uncertainty.

\subsubsection*{Ground Truth}

To compute what we take as the ground truth, we perform a Monte Carlo simulation, consisting of $10^4$ trials, for each of the $30$ cases in scrutiny. At the start of each trial a random sample $\mu_0^*$ is taken from $\mathcal{N}(x|\mu_O,\Sigma_T)$\footnote{via the MATLAB function mvrnd()}, representing the actual position of the obstacle for the simulation. Afterwards, given a discretization resolution $ds$ for the curvilinear parameter $s$, at each step if $|\mu_i(s) - \mu_O^*| \leq r_r+r_o$ then the simulation has stopped and a collision is registered. The steps of this computation are fully enumerated in \Cref{app:A} (\cref{alg:groundtruth}), where we also include procedures for other assumptions. From the Monte Carlo simulation results we compute the probability ground-truth $P_T(C)$ for each of the inquired scenarios.

We compute the Monte Carlo simulation for $\mu_B$  under H1 H2 H3 to make their differences apparent. In this experiment the covariance parameter $\sigma$ lives in a logarithmic range $\left[10^{-3}, \ 10^{0}\right]$ and the radius $r$ of the combined object is varied linearly in $\left[ 0.1, \ 1\right]$.

\subsubsection{Naive Set Parametrization Probability Estimation}
The approximation given by \cref{eq:probabilityRVparam}, using the parametrization (\ref{eq:tubeparametrizationexample}), is computed for all the considered trajectories.

\subsubsection{Grid-Based Probability Estimation}

We evaluate the approximation given by the multiplicative formula \cref{eq:gridprobbelief} in various scenarios.
The set of cells $\mathbf{m_c}:=\{ m_1,\dots,m_N \}$ which the robot intersects is obtained by rasterizing $D_{T}$ on the grid.

As it was observed, the belief value \cref{eq:bayesbelief} associated to each cell must be corrected to take the size of the cell into account. We can do so by letting
\begin{equation}
    \label{eq:cellprob}
    P(C_{m_i}) = \int_{\mathcal{D}(m_i)} \mathcal{N}(x|\mu_O,\Sigma_T)dx,
\end{equation}
where $\mathcal{D}(x)$ gives the dominion of the cell. This formula obviously accounts for the area of the tiles.
Then \cref{eq:gridprobbelief} is rewritten as
\begin{equation}
    \label{eq:gridprobtot}
    P_g(C) = 1 - \prod_{m_i\in \mathbf{m_c}} (1-P(C_{m_i}))
\end{equation}

In addition to the variation of the covariance matrix and the different trajectories, we also consider different grid resolutions in our benchmark.

\subsubsection{Stage-wise Probability Estimation}

Combining the two approximations used by Chance Constraints, the \emph{Stage-wise approximation} amounts to
\begin{equation}
    P_{cc}(C) \approx \sum_{i=1}^N P(C_i)
    \label{eq:stagewise}
\end{equation}
in which each $P(C_i)$ is calculated as in \cref{eq:ccapprox}.
Also, being $P_{cc}(C)$ a probability, we consider the performance of the approximation by saturating it to $1$.
We also vary the number of waypoints $N$ linearly from $25$ to $300$.

\subsubsection*{Risk Density Based Probability Approximation}

The approximation we employ is \cref{eq:taylorwrtT} setting $T=r$, namely
\begin{equation}
    \label{eq:pathintrinsicapprox}
    P_r(C) = rd(\mu_R(s)) \cdot r,
\end{equation}
where $r = r_r+r_o$ is the combined radius.
As it was the case for \cref{eq:stagewise}, we saturate the value of \cref{eq:pathintrinsicapprox} to $1$.

\subsection*{Assumptions Comparison}

The results of the Monte Carlo simulations for the considered assumptions is shown here. The algorithms are detailed in \cref{app:A}, the number of trials is set to $N_t = 10^4$, the number of partition of the parameter space $[0, 1]$ is set to $N_s = 10^4$.

\begin{figure}[]
    \centering
    \includegraphics[trim = 0 0cm 0 0,width=0.85\linewidth]{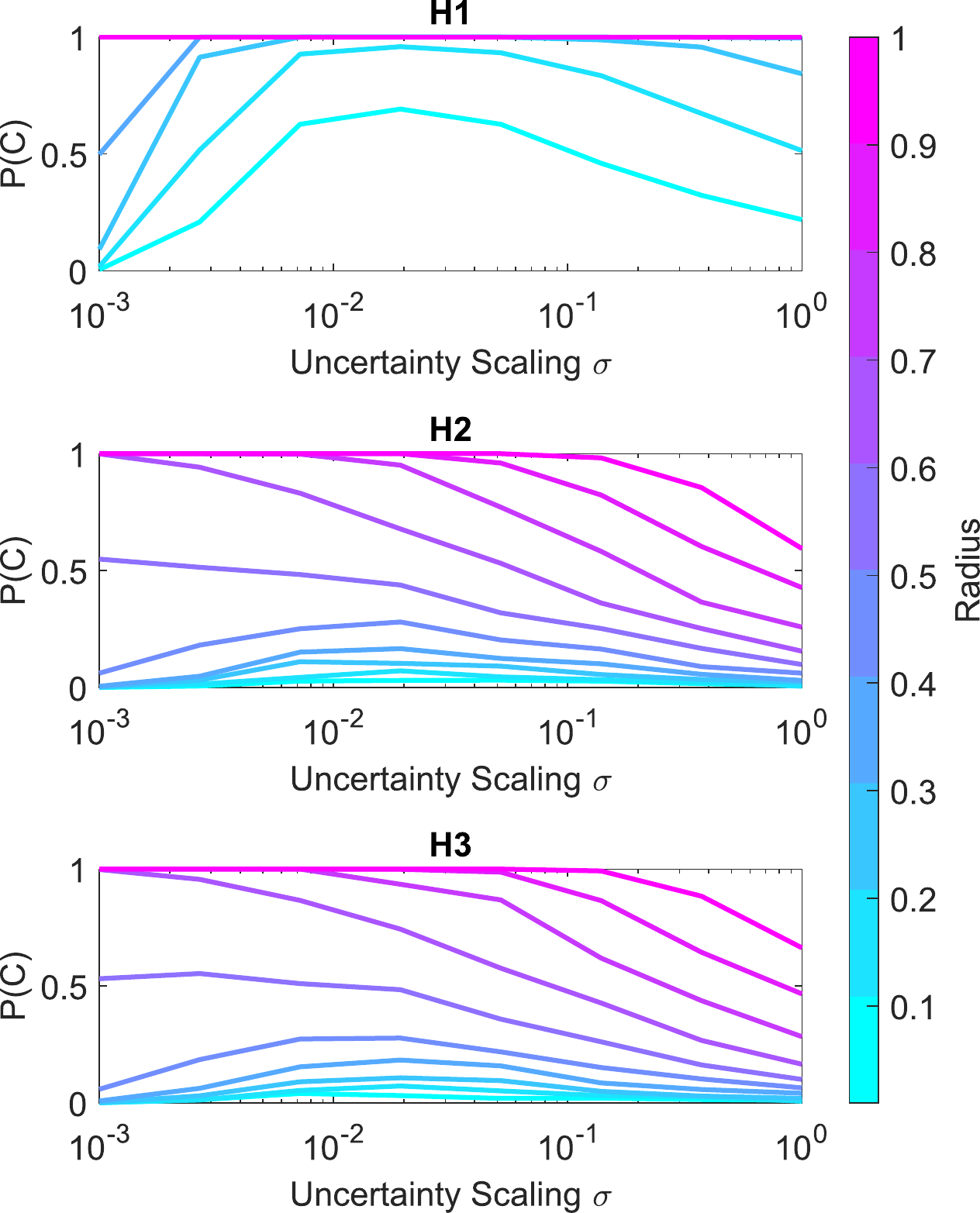}
    \caption{Comparison of Collision Event assumptions. Each plot shows the result of the Monte Carlo probability estimate under different assumptions. The color gradient indicates different radiuses. \label{fig:H1vsH2vsH3}}
\end{figure}
In \Cref{fig:H1vsH2vsH3}, the probabilities given by the trials are shown. We immediately see how the simulation under H1 results in probabilities tending to $1$ even for very small radii and high uncertainty values. This shows how H1, due to the fine discretization, is not fit to model this continuous setting.

In H2 and H3 case, while differences are present, the results are comparable. As a consequence, while the following subsection display results simulated under (H3), \cref{eq:taylorwrtT} could be used also in (H2) case.

\subsection*{Comparison Metric}

Denote $M_T\in [0,1]^{3 \times 10}$ as the matrix containing the ground truth for all the scenarios, and $M_P \in \mathbb{R}_+^{3 \times 10}$ as the matrix containing the same values computed by any of the presented approaches.

The error matrix is defined as
\begin{equation}
    \label{eq:errormatrix}
    M_e = M_T - M_P.
\end{equation}
The Frobenius Norm $||\cdot||_F$ and the maximum element in absolute value $\max(|\cdot|)$ are used to compare  the methods outlined in \Cref{subsec:setup}.

\subsection*{Results}
\label{subsec:Results}
\begin{figure}
    \centering
    \includegraphics[trim =3.5cm 0cm 3cm 1.6cm,clip,width=0.9\linewidth]{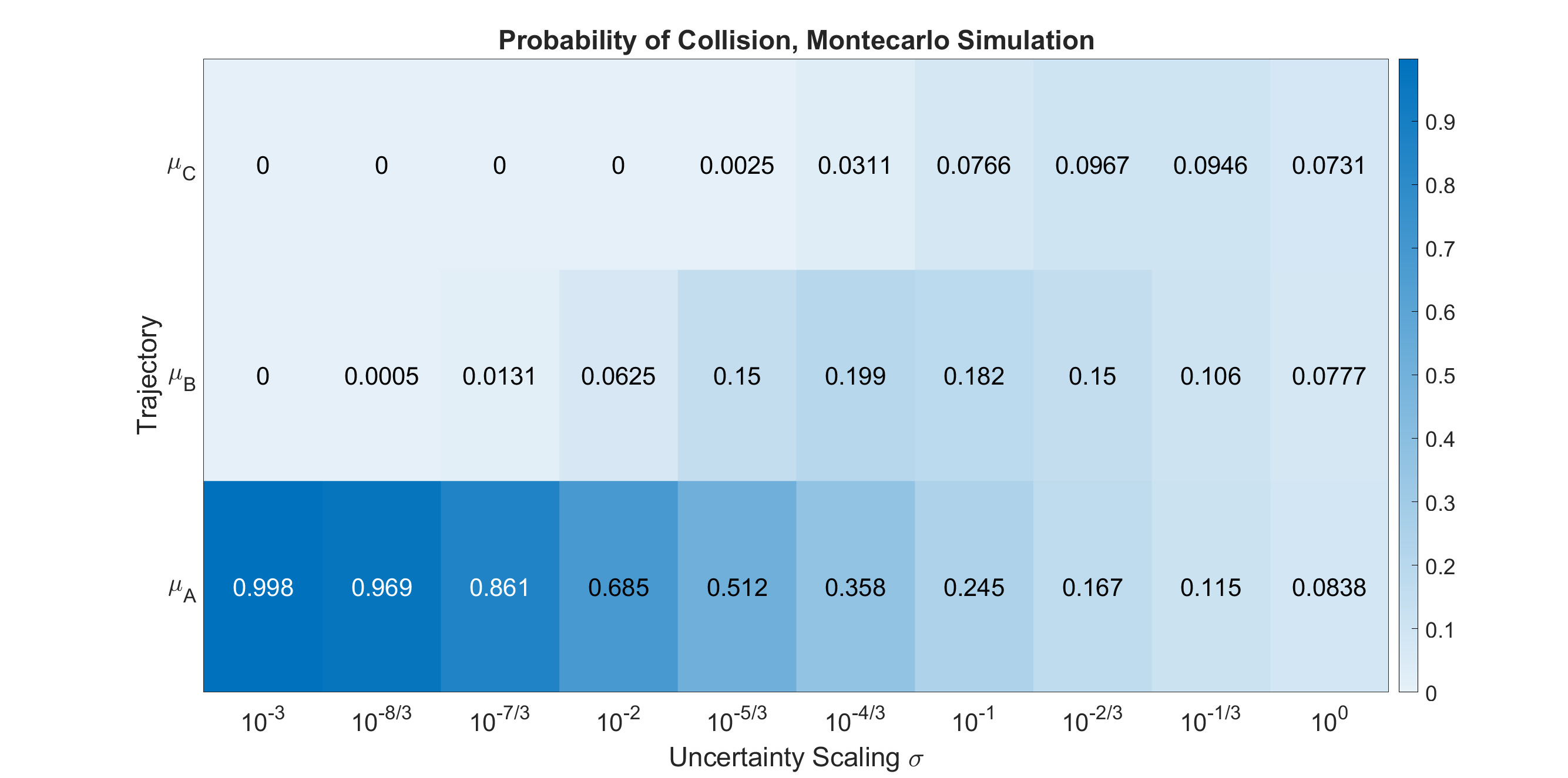}
    \caption{Probability of Collision, Monte Carlo Simulations.
        The heatmap shows the probability of collision along the trajectories $\mu_A$, $\mu_B$, and $\mu_C$, as a function of increasing values of the position error covariance $\Sigma_T$. Each cell is the result of the ensemble of the Monte Carlo Trials for that combination of covariance matrix and trajectory. This heatmap is considered the ground truth.\label{fig:montecarlo}}
\end{figure}
\begin{figure}
    \centering
    \includegraphics[trim =0cm 0cm 0cm 1.2cm,clip,width=\linewidth]{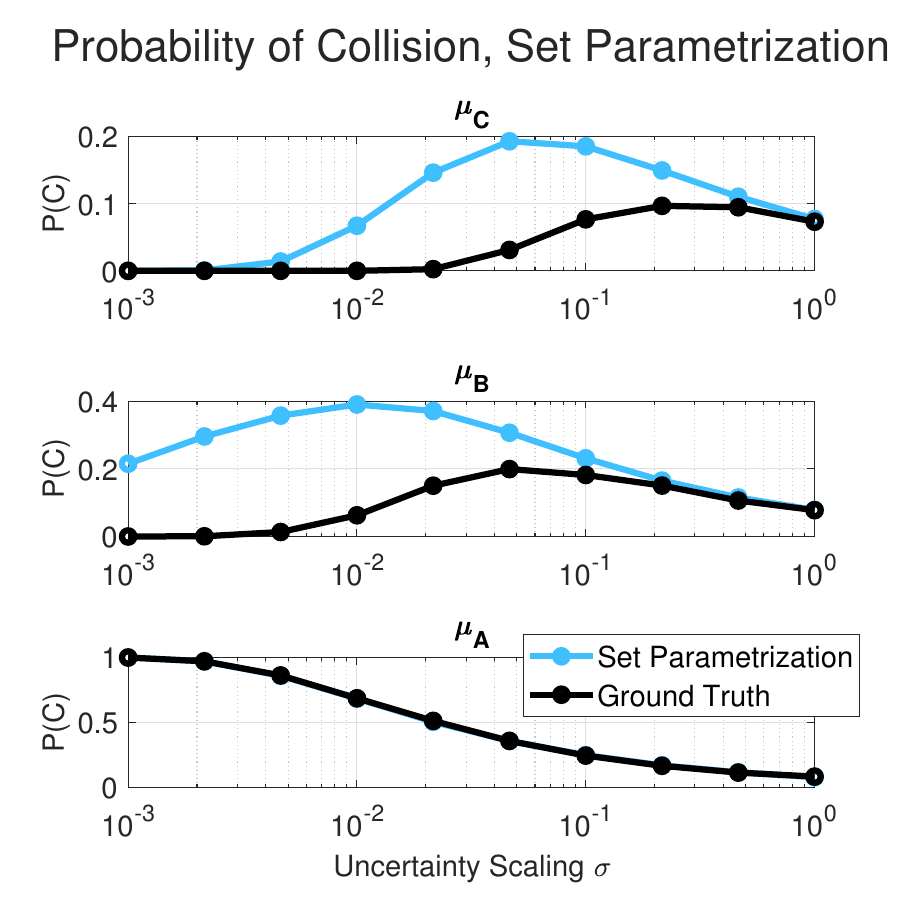}
    \caption{Probability of Collision, Set Parametrization. Values result of the integral \cref{eq:probabilityRVparam}. We see how, while the parametrization is accurate at evaluating the probability of collision of $\mu_A$, the approximation starts to falter as soon as the curvature of the trajectory is not null.  \label{fig:oneintegral}}
\end{figure}
\begin{figure*}
    \centering
    \includegraphics[trim =0cm 0cm 0cm 1.5cm,clip,width=0.9\textwidth]{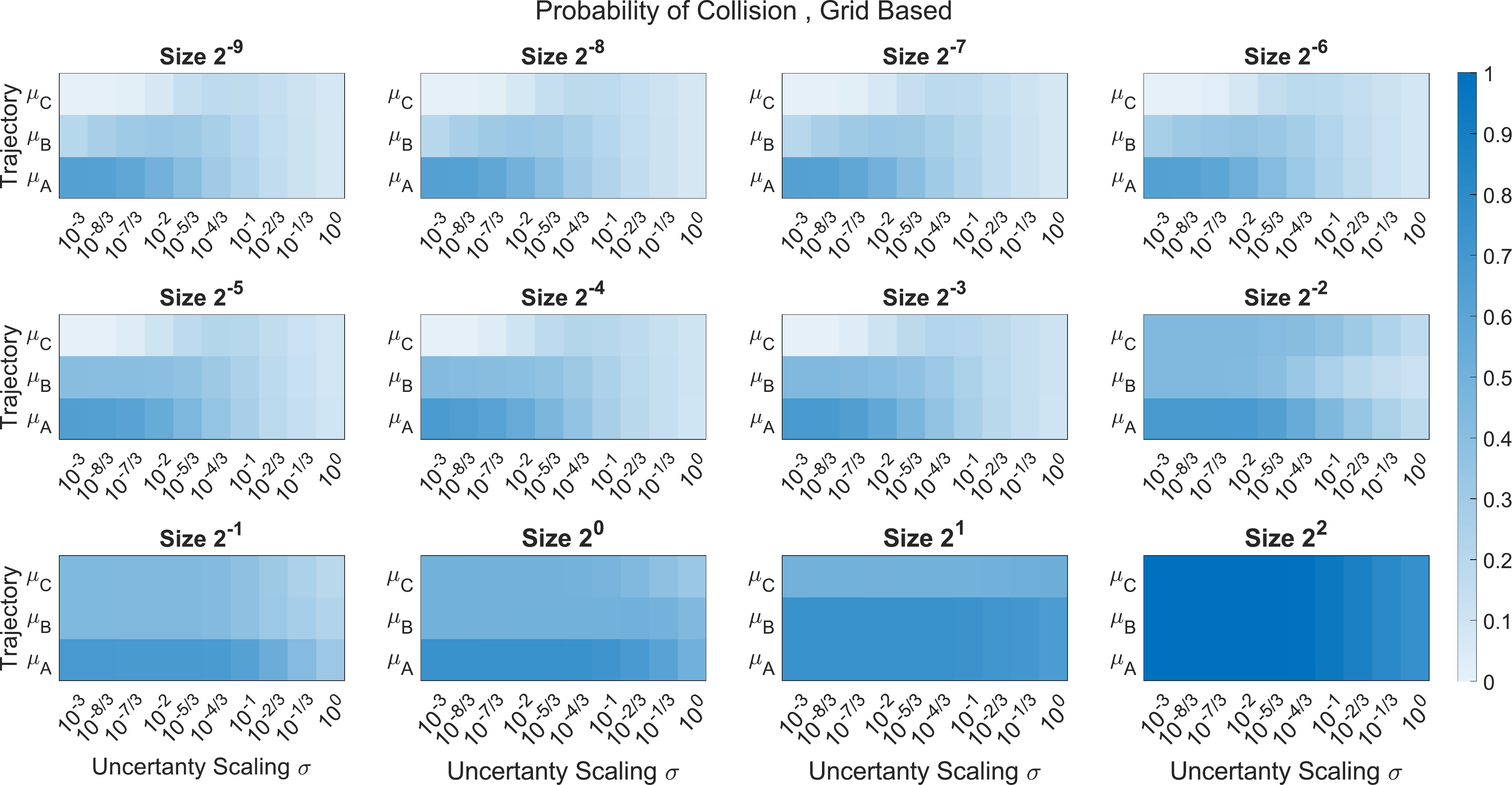}
    \caption{Probability of Collision, Grid based method. The heatmaps show the value of the probability of collision with regard of the size of the square tiling. The probability value associated to each pairing of trajectory and $\sigma$ converges as the resolution of the grid gets finer.  \label{fig:gridheatmaps}}
\end{figure*}
\begin{figure}
    \centering
    \includegraphics[trim =0cm 0cm 0cm 1.2cm,clip,width=0.9\linewidth]{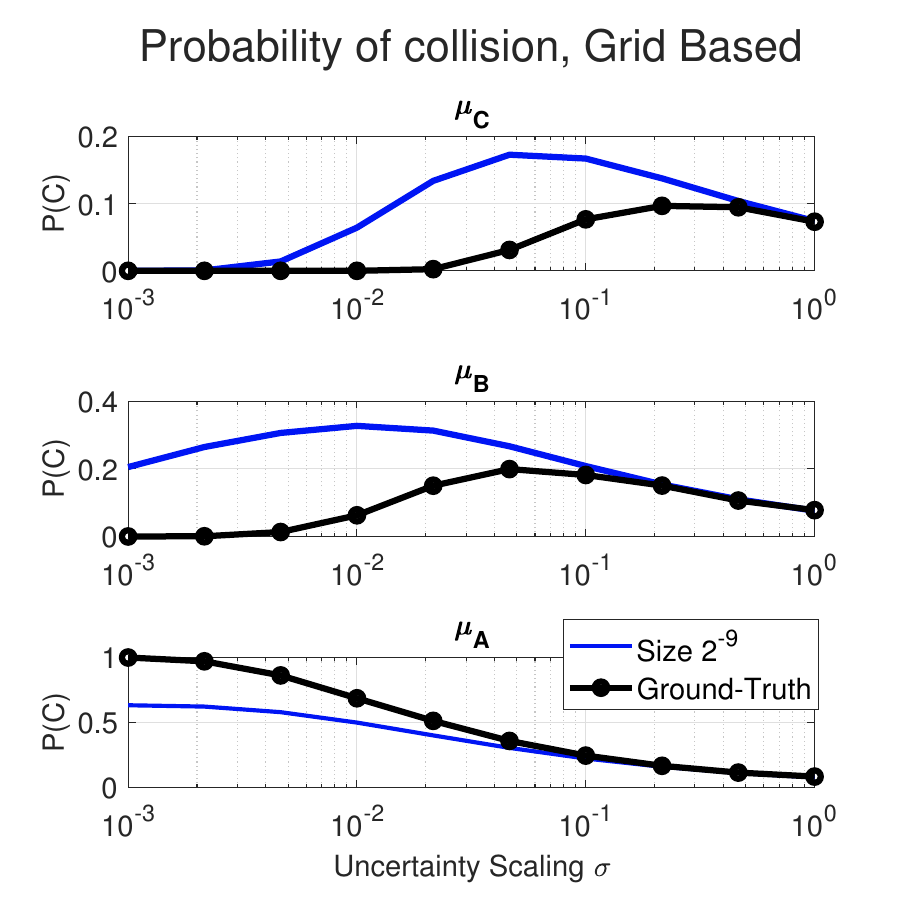}
    \caption{Probability of Collision, Grid based method - tile size $2^{-9}$. As the resolution gets finer, the grid-based approach converges to a value, which however is not the amount result of Monte Carlo simulation. \label{fig:allgrid}}
\end{figure}
\begin{figure*}
    \centering
    \includegraphics[trim =0cm 0cm 0cm 0.7cm,clip,width=0.9\textwidth]{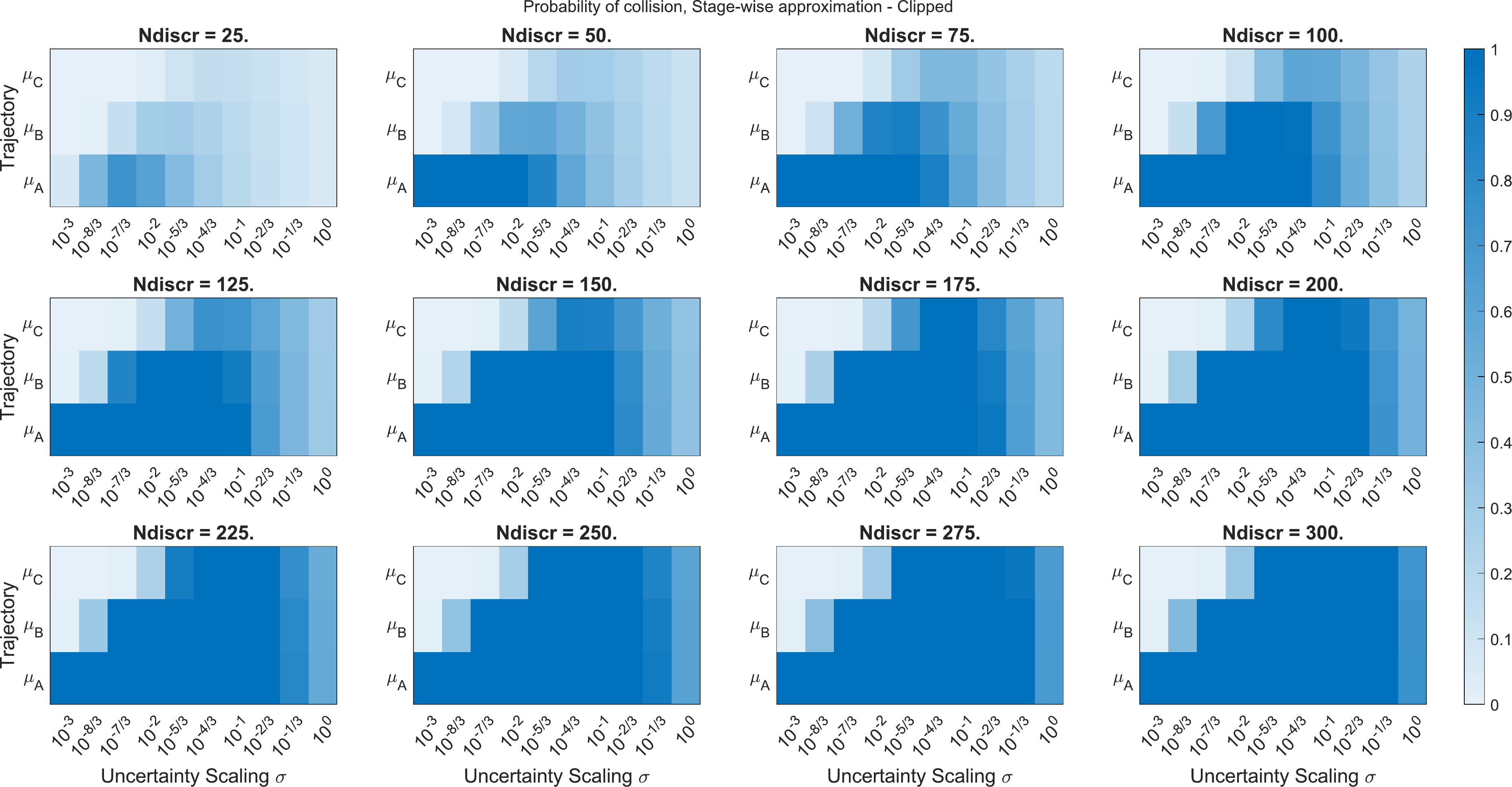}
    \caption{Probability of Collision, Stage-wise method. Result of the approximation using \cref{eq:stagewise}. The  number of waypoints, in which the approximation is computed, is varied. The values computed are ceiled at $1$. We see how the approximation is not informative after a certain number of waypoints as it always predicts a collision even in the safest of cases. \label{fig:booleclipped}}
\end{figure*}
Monte Carlo simulation results are displayed in \Cref{fig:montecarlo} for all scenarios. The heatmap shows both the trend with respect to $\sigma$ and the trajectory choice. While $\mu_A$ is clearly the riskier trajectory and presents a peak in collision probability for very low values of $\sigma$, we see how the probability of collision shows a maximum with respect to the covariance in each path. At higher values of covariance, probability between the trajectories tends to coincide.

The probability approximated by the set parametrization approach \cref{eq:probabilityRVparam} is very accurate for the first trajectory, as shown in \cref{fig:oneintegral}, but quickly degrades and overbounds as the trajectory gets further away and the curvature increases.

The grid-based approximation \cref{eq:gridprobtot} estimations are shown in \cref{fig:gridheatmaps}. In each of the heatmaps, the probability value is shown regarding uncertainty and the path picked. In \Cref{fig:allgrid} the Monte Carlo simulation is shown along with the values obtained from the grid-based approach with cell size $2^{-9}$. It is apparent that the estimation converges to a specific curve by considering a finer and finer grid, but this value approximates the ground truth poorly.

The heatmaps in \Cref{fig:booleclipped} present the behaviour of the stage-wise approximation \cref{eq:stagewise}.
The method tends to be more and more conservative as the number of waypoints $N$ increases. Moreover, saturation leads to a loss of information about the trajectory, as the estimation predicts a collision regardless of the trajectory. The value of the approximation can either be smaller or larger than the actual probability depending on the number of waypoints, as shown in \Cref{fig:ccvsmonte}.
\begin{figure}
    \centering
    \includegraphics[trim =0cm 2cm 0cm 1.8cm,clip,width=0.9\linewidth]{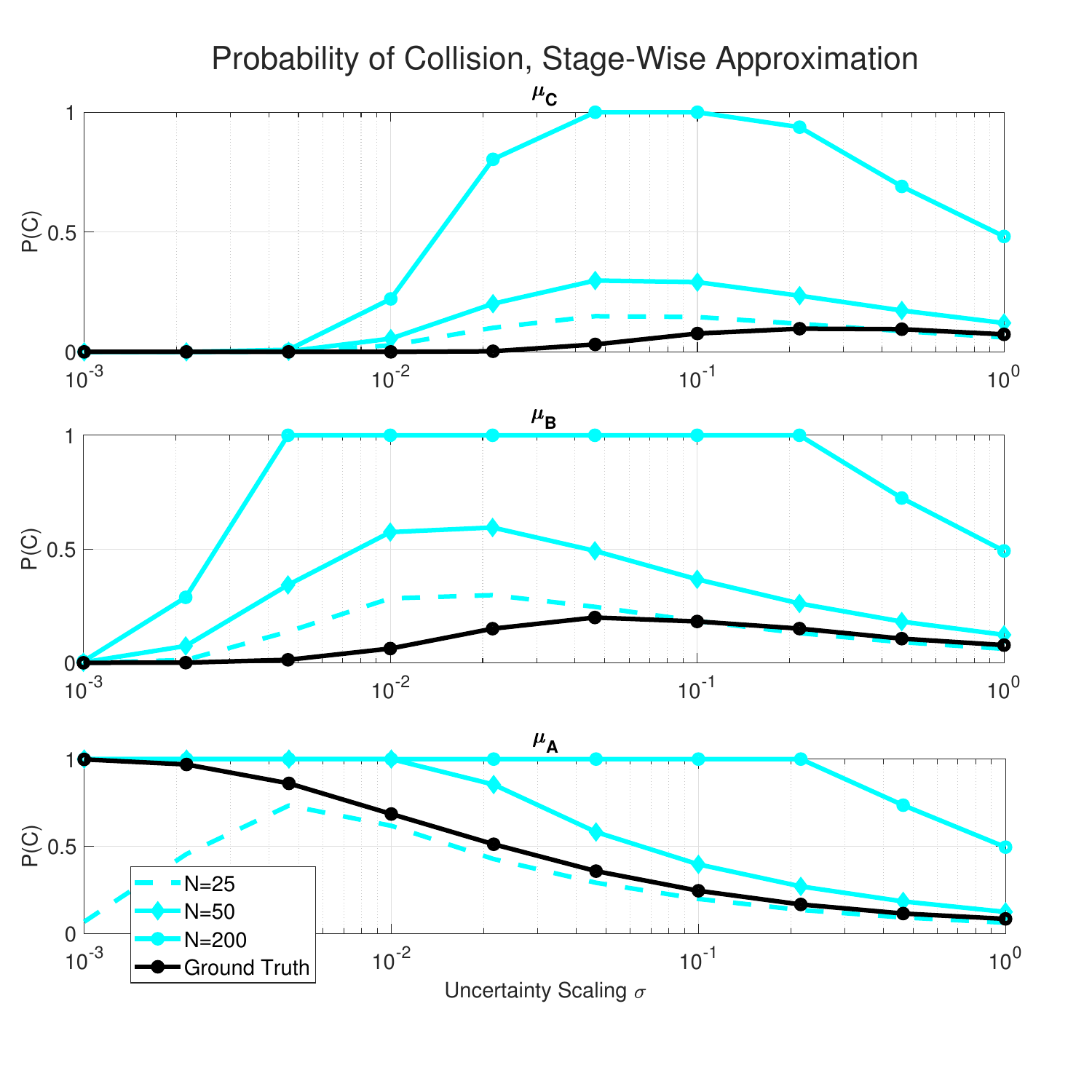}
    \caption{Probability approximation, Stage-wise approximation. Values result of \cref{eq:stagewise}, for some selected number of waypoints $N$, compared against Monte Carlo simulation. While usually this approach overbounds, notice how when $N=25$ the probability of $\mu_A$ is lower than the actual value.
        \label{fig:ccvsmonte}}
\end{figure}
\begin{figure}
    \centering
    \includegraphics[trim =0cm 0cm 0cm 0.75cm,clip,width=0.9\linewidth]{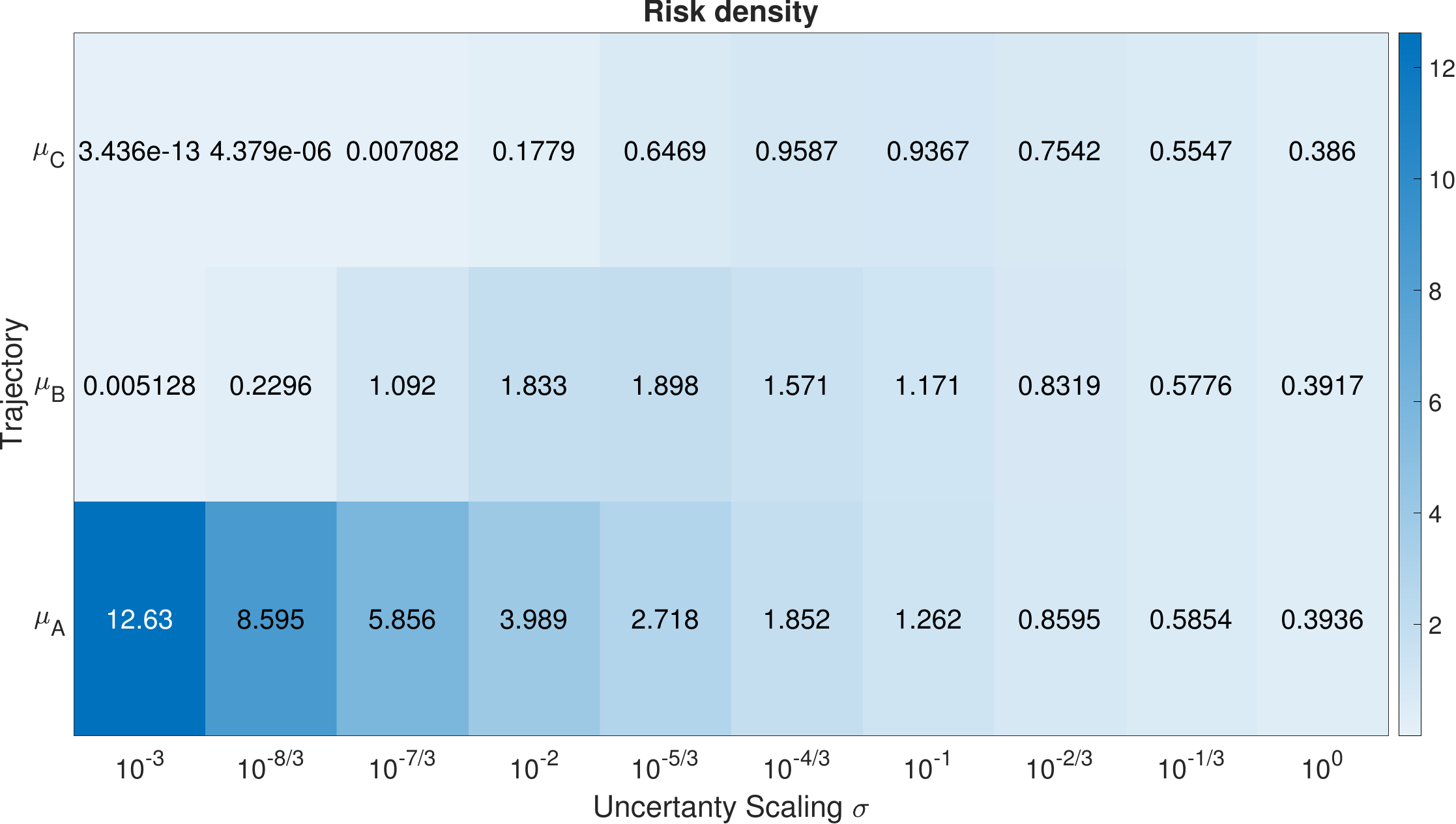}
    \caption{Risk Density. Behaviour of \cref{eq:riskmeasure} w.r.t. uncertainty scaling and path chosen. While the vales shown are not probabilities, the behavior with respect to $\sigma$ is maintained. \label{fig:riskdensity}}
\end{figure}
\begin{figure}
    \centering
    \includegraphics[trim =0cm 0cm 0cm 1.5cm,clip,width=0.9\linewidth]{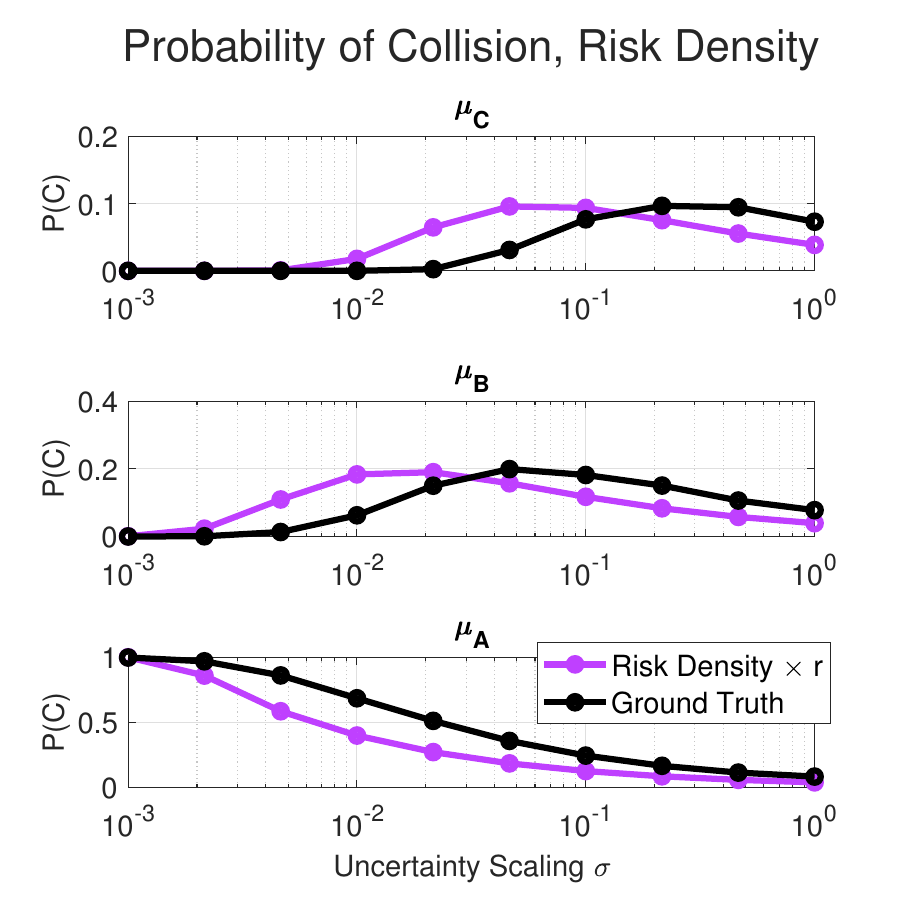}
    \caption{Probability of Collision, Risk Density. Behaviour of the proposed approximation \cref{eq:riskmeasure} against Monte Carlo simulation. Notice how the purple lines have the same \enquote{shape} of the heatmap shown in \cref{fig:riskdensity}, as these two graphs differ only by the multiplication factor $r$, as in \cref{eq:pathintrinsicapprox}. \label{fig:riskvsmonte}}
\end{figure}

The trend of \emph{risk density} \cref{eq:riskmeasure} considered in isolation is displayed in \cref{fig:riskdensity}. The behaviour of the approximation \cref{eq:pathintrinsicapprox} is compared to the ground truth given by Monte Carlo in \cref{fig:riskvsmonte}.

A summarizing view is given in \cref{fig:errorsall}. Here the errors against the ground truth are plotted with regard to $\sigma$.
The set parametrization approach \cref{eq:probabilityRVparam}, shown in light blue in \cref{fig:errorsall}, is the most accurate in appraising the risk of $\mu_A$, while the discrepancy with the Monte Carlo ground truth is larger the further the path is from the obstacle.

A more detailed comparison is given in \cref{tb:normandtime}. The table shows how our approach gives a $12\%$ improvement in the Frobenius norm of the estimation error matrix compared to the stage-wise approximation. The unsaturated values are not included, as is evident how the stage-wise approximation greatly overestimates the actual probability whenever the path in inquiry is \enquote{closer} to the obstacle ($\mu_A$).

\begin{figure*}[]
    \centering
    \includegraphics[trim =3cm 0cm 3cm 0cm,clip,width=0.9\textwidth]{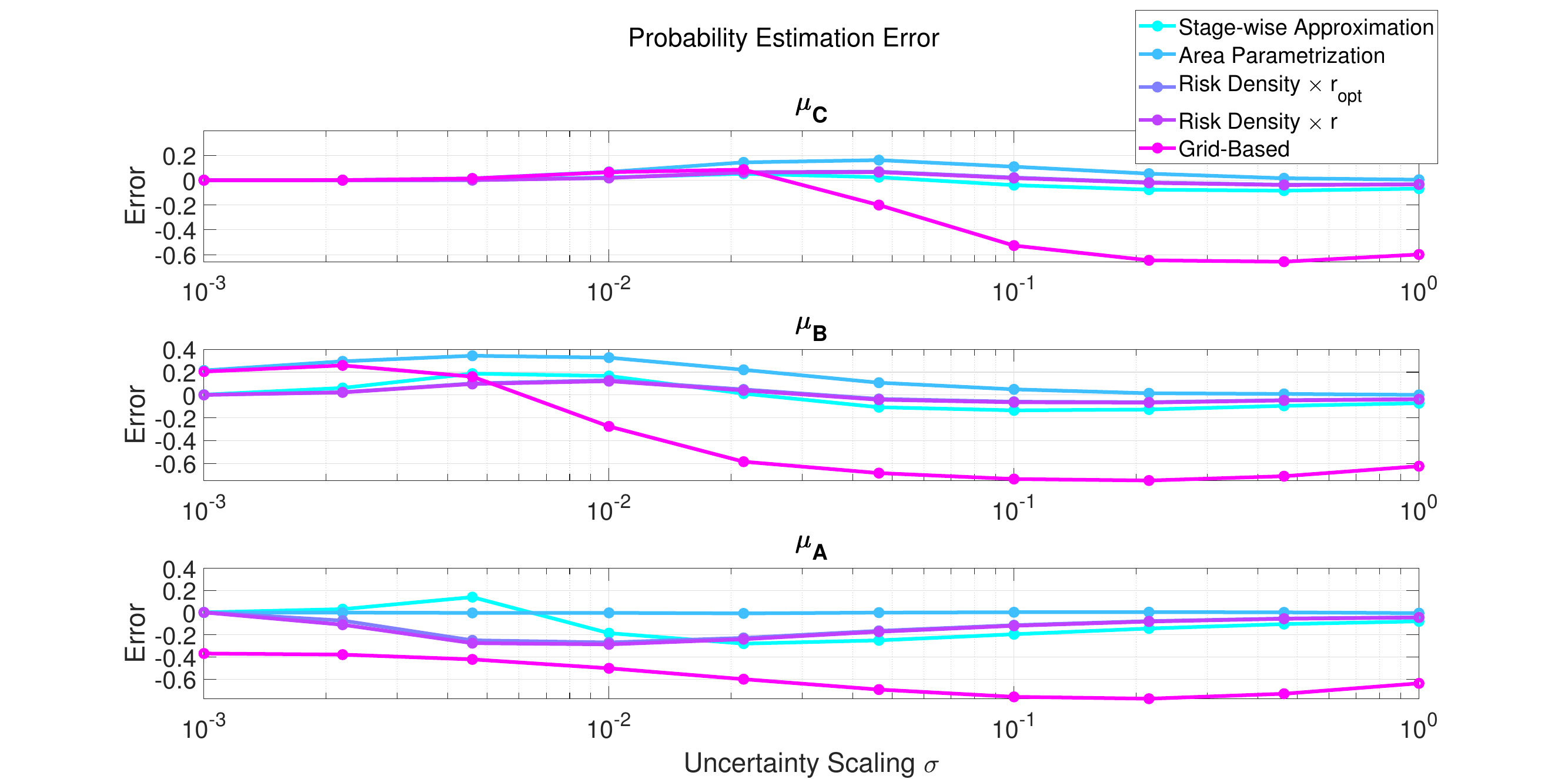}
    \caption{Errors against the Monte Carlo simulation. The error is given in the $[-1,1]$ range: if the value plotted is positive, there is an overestimation of the probability, conversely, a negative value denotes an underestimation. The stage-wise approximation error shown is computed while fixing $N$, the number of waypoints, to $50$. In the chart we show both the probability estimated by using as a factor $r$ and $r_{opt}$.\label{fig:errorsall}}
\end{figure*}
\begin{table*}[]
    \centering
    {\scriptsize
        \begin{tabular}{|l|c|c|cc|c|c|c}
            \hline          & Stage-wise {\tiny$(N=50)$}  & Parametrization \cref{eq:tubeparametrizationexample} & Risk density $r$              & $r_{opt}$({\tiny$=0.1043$}) & Grid-based (Size{\tiny $=2^{-9}$}) & Monte Carlo ({\tiny$N_T=10^4$}) \\ \hline
            $\max(|M_e|)$   & $0.2796$                    & $0.3438$                                             & $0.2863$                      & $0.2689$                    & $0.7754$                           & 0                               \\
            $||M_e||_F$     & $0.649$                     & $0.698$                                              & $0.579$                       & $0.545$                     & $2.869$                            & 0                               \\
            Comp Time $[s]$ & $(7.2\pm 2.2)\cdot 10^{-3}$ & $(8.7 \pm 0.5 ) \cdot 10^{-1}$                       & $(1.3 \pm 0.2) \cdot 10^{-2}$ & $\sim$                      & $380.43^*$                         & $29 \pm 3$                      \\ \hline
        \end{tabular}
    }
    \caption{Value of the matrix norm of the ensemble of errors $M_P$ and computation time of considered methods. The computation time refers to the execution of the whole set of $30$ scenarios. The stage-wise and risk density-based approach errors are from the saturated approximation. The values in the table were obtained by running each method $100$ times and computing the mean time and its standard deviation. All the simulations were run on a modern laptop with Matlab 2022b on an Intel i7-12700H with 32GB of Ram. \scriptsize{$^*$We include the computation time for the grid-based computation for completeness, but our simulation introduces complexity that would not be present in a standard implementation.}  \label{tb:normandtime}}
\end{table*}

\section{DISCUSSION}
\label{sec:discussion}
As shown by the simulations, the accuracy of the approaches varies greatly, even in the simple scenario considered above.

All reviewed methods capture the qualitative behavior shown by the Montecarlo simulation, exhibiting a maximum in the probability estimate with respect to the uncertainty scaling for all approaches.
However, the parametrized probability integral \cref{eq:probabilityRVparam} is not a reliable approximation even for modest curvature values: this happens due to the fact that the parametrization \cref{eq:tubeparametrizationexample} of the integration set is not an homeomorphic map between $[0,1]\times[-T,T]$ and $\mathbb{W}$, leading to an overestimation of the actual probability value. Moreover, the computation is $1-2$ orders of magnitude slower with respect to other methods.

As mentioned in \Cref{sec:numval}, the ground value peak appears at very low values of $\sigma$ in $\mu_A$'s case. Ideally the peak would be at $\sigma = 0$, the deterministic scenario, since the probability of collision of a nominal path passing through an obstacle whose position is known with $100\%$ certainty. As the path moves away from the obstacle nominal position, the peaks also shift towards higher values: while in the deterministic case the robot would narrowly avoid the obstacle, a higher variance $\Sigma_T$ favours the collision event. 

When the resolution of the cells gets finer, the grid-based method does not converge to the probability predicted by the Montecarlo simulation. This fact is mainly due to the difference between the assumption underlying the grid approach and (H3): the latter assumes the independence of cells, which does not apply to this scenario.

The stage-wise approximation \cref{eq:stagewise} of the probability of collision is fast and, as such, suited to optimization. However, while \cref{eq:dutoitapprox} is not guaranteed to overbound the probability, we observe that for almost every number of waypoints, the result of \cref{eq:stagewise} is a substantial overapproximation of the ground truth. This conservativeness in an optimization setting could lead to categorizing admissible states as unfeasible.

At last, the proposed function \cref{eq:riskmeasure}, coupled with the approximation \cref{eq:pathintrinsicapprox}, shows some interesting properties. As no exogenous parameters are included in its computation, we call it \emph{path intrinsic}; namely, the probability \cref{eq:pathintrinsicapprox} is just a functional of the path $\mu_R(\cdot)$ taken by the robot and the combined radius $r$, while no other tuning knob, such as $N$, the number of waypoints, or the resolution of a grid, is considered. It could be argued that the $r$ parameter by which the density \cref{eq:riskmeasure} is scaled in \cref{eq:pathintrinsicapprox} is not the optimal one to estimate the probability of collision. To validate our choice, we obtain $r_{opt}$ by minimizing $||M_e||_F$ for an arbitrary choice of the scaling parameter. As shown in \cref{tb:normandtime}, the variation in the values of the error is modest and does not justify the need for this optimization.

Curiously, the approximation  \cref{eq:pathintrinsicapprox} is more accurate than the integral \cref{eq:probabilityRVparam}: taking the derivative of \cref{eq:probabilityRVparam} at \enquote{thickness} $0$ rids \cref{eq:pathintrinsicapprox} of the local homeomorphicity issue encountered in \cref{eq:probabilityRVparam}. Furthermore, the method also seems to be competitive in regard to the accuracy of the approximation, being the least conservative among the considered approaches.

While limited in scope, the experiment also shows that the computational cost of the Risk Density approach is of the same order of magnitude ($\sim\times2$) as the fastest method (Stage-wise approximation), as shown in \cref{tb:normandtime}. This relatively small computational complexity is explained by the fact that our approximation \cref{eq:pathintrinsicapprox} includes only a uni-dimensional integral, which is easily evaluated numerically.

\section{RISK DENSITY AND THE SENSITIVITY OF COLLISION PROBABILITY}
\label{sec:newapprox}
As shown in \cref{sec:numval}, risk density can be used to give an approximation of the collision probability when the position of the robot and the obstacle are uncertain.
We now depict another situation in which \cref{eq:riskmeasure} is useful.
Let us assume that we have already computed the probability of collision very accurately (e.g. through a long running MC simulation) for an environment we know the dimensions of very well. We call this estimate of the collision probability $P_M^{(i)}$ and its associated dimensional parameter $T_i$. Assume now that the dimension of the robot and/or the obstacle, i.e. the environment, changes, while the trajectory followed by the robot, $\mu_R$, is unvaried. The previously computed estimate of the collision probability is now invalid, but computing the collision probability $P_M^{(i+1)}$ would imply a hefty computational cost. To ameliorate this issue, we can use the risk density based approximation \cref{eq:taylorwrtT}.

A first order approximation of $P_M^{(i+1)}$ given by the sensitivity is
\begin{equation}
  \label{eq:sensitivityMCapprox}
  P_M^{(i+1)} \approxeq P_M^{(i)} + S_r^{Hi}(\mu_R,T_i)\cdot d_{T_i},
\end{equation}
where $Hi$ stands for either $H2$ or $H3$. We can obtain a similar relation by exploiting the collision probability approximation given by the risk density.
Explicitly, when both the probabilities $P_M^{(i+1)}$ and $P_M^{(i)}$ are approximated by \cref{eq:taylorwrtT}, the following can be written
\begin{equation}
  \label{eq:rdMCapprox}
  P_M^{(i+1)} \approxeq P_M^{(i)} + r_d(\mu_R)\cdot d_{T_i},
\end{equation}
where $d_{T_i} = (T_{i+1} - T_{i})$.
While the formulation \textcolor{blue}{\cref{eq:sensitivityMCapprox}} should in principle give a better estimation, it also requires to compute an integral for each evaluation, i.e. \cref{eq:sensH3} for every CP at a different $T_i$, while \cref{eq:rdMCapprox} requires to compute just once a uni-variate integral, the risk density \cref{eq:riskmeasure}.

\subsection*{Numerical Validation Setup}

To explore how the proposed approximation \cref{eq:rdMCapprox} performs, we compute its value and the result of a MC simulation (considered as ground truth) in a slightly modified version of the setup introduced in \Cref{sec:numval}. We maintain the choice of paths and the range of $\sigma$, albeit lowering the resolution, while we impose the combined disk radius $r$ to vary logarithmicly in the range $[10^{-2},1]$: accordingly the value of $d_{T_i}$ at each index will be $r_{i+1} - r_i$.

\subsection*{Discussion of Results}
The graph \Cref{fig:H3approx_weird} shows a comparison of the result of \cref{eq:rdMCapprox} against the ground truth, while the tables \cref{tab:averageabsoluteerror} \cref{tab:relativeabsoluteerror} display both the absolute error and the relative absolute error between the two, i.e. $e_a = | P_M^{(i+1)} - P_M^{(i)} + r_d(\mu_R)\cdot d_{T_i}|$ and $e_r = e_a/P_M^{(i+1)}$.

We see how the proposed approximation follows the ground truth generally well with few exceptions, as both the mean errors $e_a$ and $e_r$, are relatively low, making the approach attractive even in comparison to \cref{eq:sensitivityMCapprox}: \Cref{tab:averageabsoluteerror} show lower error values for \cref{eq:sensitivityMCapprox} in all but few cases, but the differences are not drastic and there is additional computational cost required. The limitations of the approach show up in the case in which the trajectory is far from the obstacle relative to the covariance scaling $\sigma$, e.g. in the set of pairs $\{(\mu_B,10^{-3}),(\mu_C,10^{-3}), (\mu_C,10^{-2})\}$, but while the relative error $e_r$ is high, the absolute one is limited. The reason lies in the fact that the risk density \cref{eq:riskmeasure} in these cases evaluate very close to $0$ and \cref{eq:rdMCapprox} output is very close to $P_M^{(i)}$, shown as an apparent shift of the graphs in \cref{fig:H3approx_weird}.

\begin{figure}
  \centering
  \includegraphics[clip,width=0.9\linewidth]{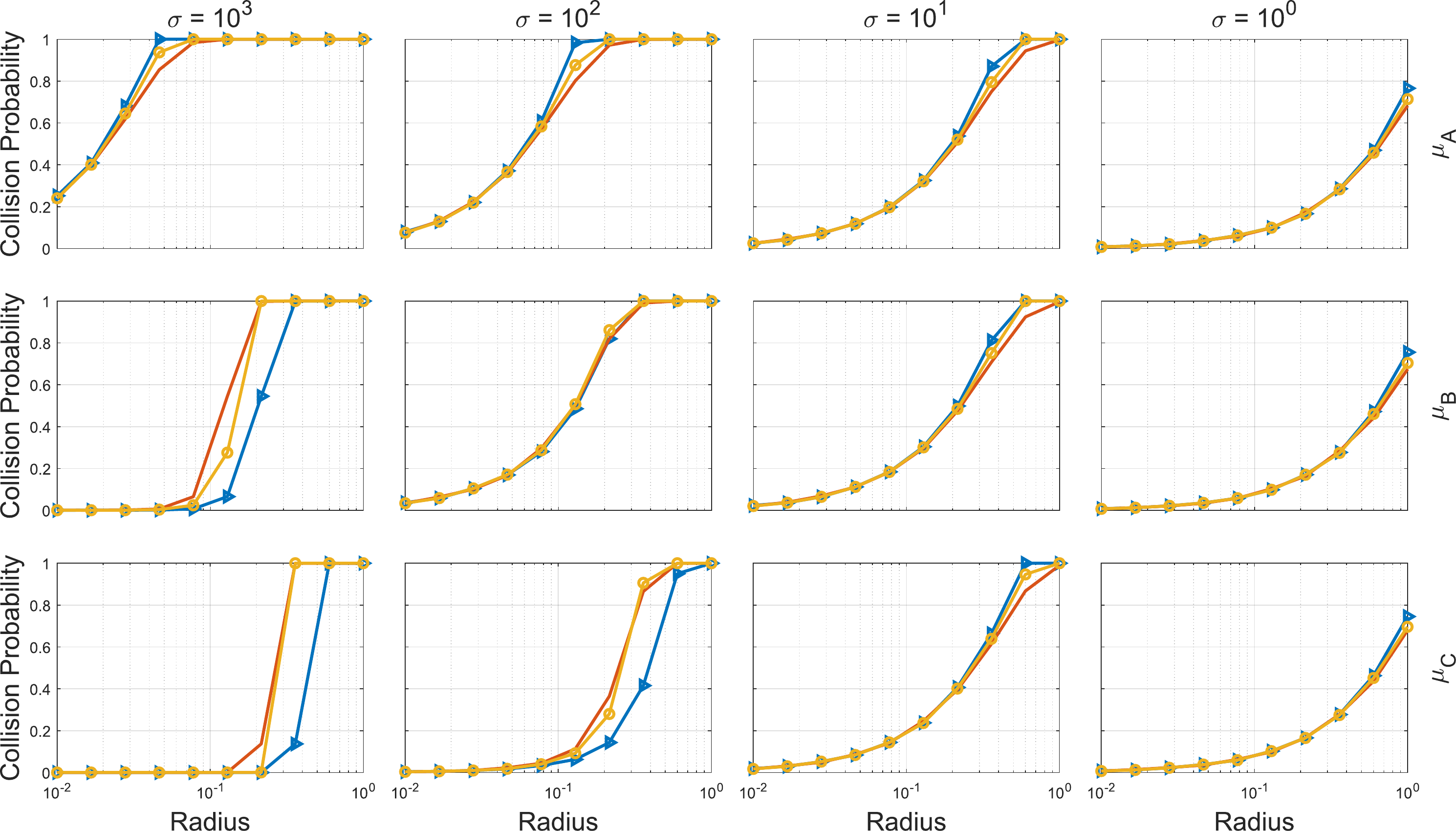}
  \caption{Comparison of MC ground truth against approximations. The graphs show the CP estimate against the radius' dimension. The MC ground truth is showed in Red, while the approximations are depicted in Blue \cref{eq:rdMCapprox} and Yellow \cref{eq:sensitivityMCapprox}. The grid is disposed to make the scrutinized pairing of path and $\sigma$ explicit. \label{fig:H3approx_weird}}
\end{figure}

\begin{table}[]
  \centering
  {\scriptsize
    \begin{tabular}{|l|r|r|r|r|}
      \hline
      $\bm{\mu} \ \backslash \ \bm{\sigma}$ & $\bm{10^{-3}}$                                                  & $\bm{10^{-2}}$                                                     & $\bm{10^{-1}}$                                                      & $\bm{1}$                                                       \\ \hline
      $\bm{\mu_A}$                          & \textcolor{RoyalBlue}{2.33} \textcolor{BurntOrange}{1.25} [\%]  & \textcolor{RoyalBlue}{2.63} \textcolor{BurntOrange}{1.27} [\%]     & \textcolor{RoyalBlue}{2.23} \textcolor{BurntOrange}{1.26} [\%]      & \textcolor{RoyalBlue}{1.23} \textcolor{BurntOrange}{0.57} [\%] \\
      $\bm{\mu_B}$                          & \textcolor{RoyalBlue}{9.96} \textcolor{BurntOrange}{3.15} [\%]  & \textcolor{RoyalBlue}{0.54} \textcolor{BurntOrange}{0.75}  [\%]    & \textcolor{RoyalBlue}{2.21} \textcolor{BurntOrange}{1.43}  [\%]     & \textcolor{RoyalBlue}{1.33} \textcolor{BurntOrange}{0.72} [\%] \\
      $\bm{\mu_C}$                          & \textcolor{RoyalBlue}{10.00} \textcolor{BurntOrange}{1.36} [\%] & \textcolor{RoyalBlue}{7.88} \textcolor{BurntOrange}{1.56}     [\%] & \textcolor{RoyalBlue}{2.22} \textcolor{BurntOrange}{1.39}      [\%] & \textcolor{RoyalBlue}{1.06} \textcolor{BurntOrange}{0.49} [\%] \\ \hline
    \end{tabular}
  }
  \caption{Mean absolute error w.r.t. the radius for \cref{eq:rdMCapprox} (\textcolor{RoyalBlue}{Blue}) and \cref{eq:sensitivityMCapprox} (\textcolor{BurntOrange}{Yellow}). \label{tab:averageabsoluteerror}}
\end{table}

\begin{table}[]
  \centering
  {\scriptsize
    \begin{tabular}{|l|r|r|r|r|}
      \hline
      $\bm{\mu} \ \backslash \ \bm{\sigma}$ & $\bm{10^{-3}}$                                                   & $\bm{10^{-2}}$                                                      & $\bm{10^{-1}}$                                                      & $\bm{1}$                                                       \\ \hline
      $\bm{\mu_A}$                          & \textcolor{RoyalBlue}{3.09} \textcolor{BurntOrange}{1.56} [\%]   & \textcolor{RoyalBlue}{8.81} \textcolor{BurntOrange}{1.95} [\%]      & \textcolor{RoyalBlue}{4.15} \textcolor{BurntOrange}{2.64} [\%]      & \textcolor{RoyalBlue}{4.78} \textcolor{BurntOrange}{3.69} [\%] \\
      $\bm{\mu_B}$                          & \textcolor{RoyalBlue}{45.97} \textcolor{BurntOrange}{32.51} [\%] & \textcolor{RoyalBlue}{2.26} \textcolor{BurntOrange}{2.37}  [\%]     & \textcolor{RoyalBlue}{4.36} \textcolor{BurntOrange}{3.08}  [\%]     & \textcolor{RoyalBlue}{4.23} \textcolor{BurntOrange}{3.26} [\%] \\
      $\bm{\mu_C}$                          & \textcolor{RoyalBlue}{18.63} \textcolor{BurntOrange}{9.95} [\%]  & \textcolor{RoyalBlue}{22.56} \textcolor{BurntOrange}{8.78}     [\%] & \textcolor{RoyalBlue}{4.68} \textcolor{BurntOrange}{3.56}      [\%] & \textcolor{RoyalBlue}{5.01} \textcolor{BurntOrange}{4.14} [\%] \\ \hline
    \end{tabular}
  }
  \caption{Mean relative absolute error w.r.t. the radius for \cref{eq:rdMCapprox} (\textcolor{RoyalBlue}{Blue}) and \cref{eq:sensitivityMCapprox} (\textcolor{BurntOrange}{Yellow}).\label{tab:relativeabsoluteerror}}
\end{table}

\section{TRAJECTORY OPTIMIZATION}
\label{sec:trajopt}
To better explore the applicability of the proposed approach, in this section we show how \cref{eq:taylorwrtT} can be used in an open-loop trajectory optimization setting.

As already explored in \cref{sec:approach}, consider the set of obstacles $\mathbb{O}$, each $O_i$ with its associated radius $T_i$ and distribution $\mathcal{N}_i$. Denote as $\mathbb{T}$ the vector of the obstacles' radii and $rd_{\mathbb{O}}$ the vector of all the risk densities \cref{eq:riskmeasure} computed for each obstacle. The probability estimation is then given by
\begin{equation}
  \label{eq:riskapproxmult}
  P_a = rd_{\mathbb{O}} \cdot \mathbb{T}.
\end{equation}
We give two examples of how this formulation can be used in an optimization setting.

\subsection*{Setup}

A robot is to plan a trajectory $\gamma$ to reach a point hidden behind two \enquote{L} shaped corners, while keeping its probability of collision under a certain value.

The trajectory is parametrized as a Bezier curve of the $4$th order $\gamma(s,x_c): \mathbb{S}\times \mathbb{R}^{2\times5}\to \mathbb{R}^2$, of which we optimize the control points $x_c$ to minimize the length of the curve. The obstacles are decomposed in small disks, $r=0.1$, each distributed normally around their nominal position with identical covariance $\Sigma_i = \sigma \cdot I$. The same setting is investigated under different $\sigma$ to analyze the effect of varying uncertainty on the optimization.

The optimization problem takes then the following form.
\begin{align}
  \label{eq:cornerconstrained}
  \min_{x_c} & \int_{0}^1 \left| \frac{d \gamma}{ds} \right| ds &           \\
  s.t.       &                                                  & \nonumber \\
             & P_a(x_c) \leq c_{max}                            & \nonumber \\
             & \gamma(0,x_c) = x_0                              & \nonumber \\
             & \gamma(1,x_c) = x_F                              & \nonumber
\end{align}
The constrained problem is implemented as unconstrained through the use of a logarithmic barrier function \cite{nocedalNumericalOptimization2006a} for the first constraint, i.e. an additional term $-\log(c_max-P_a(x_c))$ is added to the objective function. In this case we have set $c_{max} = 0.2$.
This formulation has been consequently discretized, with $N_d = 300$, and optimized using the \texttt{fminunc} function in MATLAB. Each optimization iteration, one for each value of $\sigma$, is initialized by a fixed feasible configuration of $x_c$.
To compare the efficacy of this method, both a Monte Carlo simulation with $N_M = 10^4$ trials, and \cref{eq:stagewise}\footnote{The summation index set in this case is the cartesian product of the obstacle indexes and the discretization steps.} are computed on the optimized paths.

\subsection*{Results}

\begin{figure*}
  \centering
  \includegraphics[clip,width=0.9\textwidth]{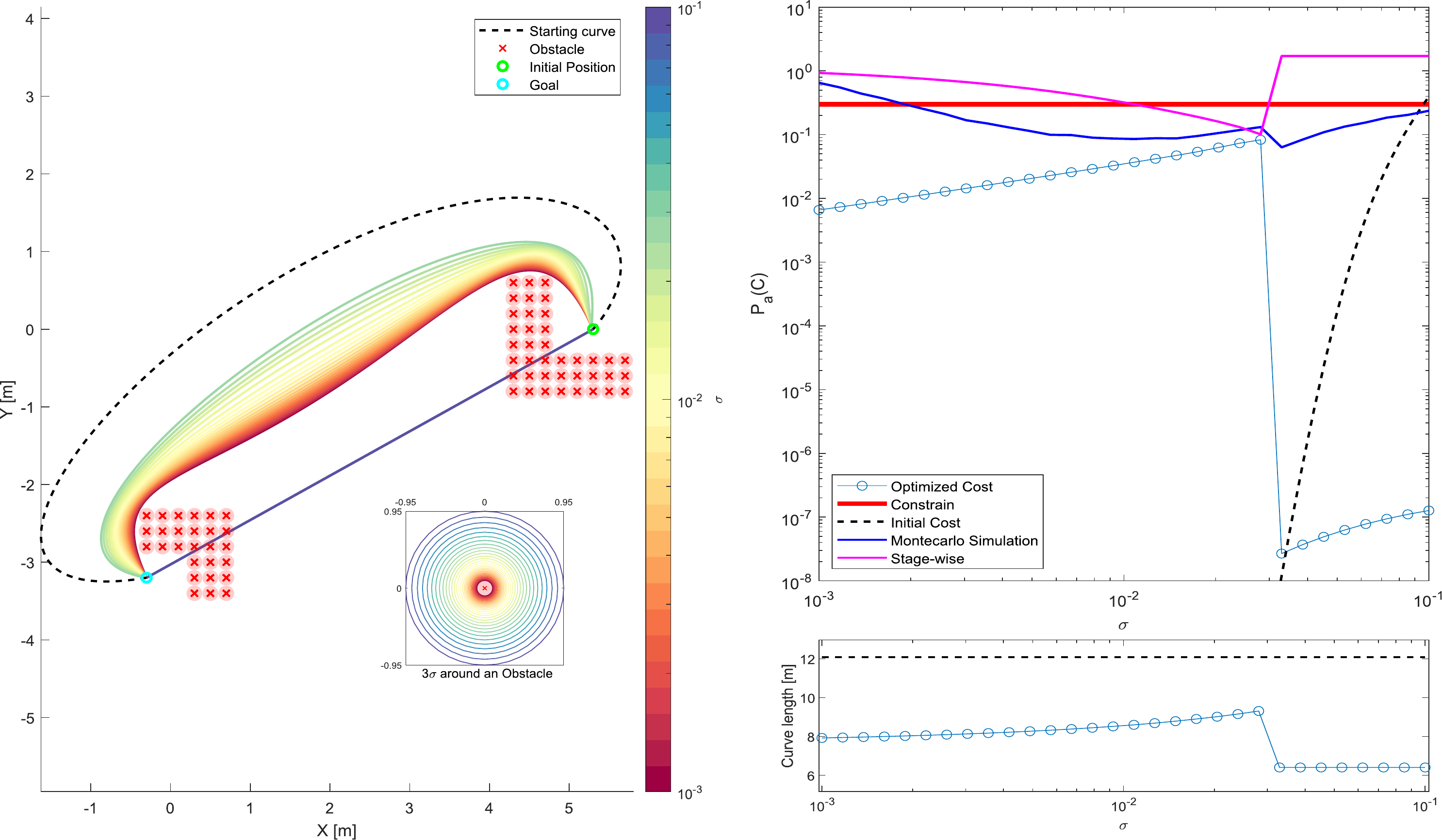}
  \caption{Result of the constrained optimization. Left: Trajectories follow a color gradient highlighting the value of the obstacles' $\sigma$ considered. The square in the bottom right shows how large a circle with radius $3$-std is. The obstacles are shown in red in their nominal position. Right: Plots detailing the length of the trajectory and the estimated probability for each $\sigma$ iteration. The probability estimated by a Monte Carlo simulation and by \cref{eq:stagewise} are also shown. \label{fig:cornercostrained}}
\end{figure*}

\begin{table*}[]
  \centering
  {\scriptsize
    \begin{tabular}{|l|r|r|r|r|r|r|r|r|r|r||r|}
      \hline
      $\bm{\sigma}$           & $\bm{10^{-3}}$            & $\bm{10^{-\nicefrac{25}{9}}}$ & $\bm{10^{-\nicefrac{23}{9}}}$ & $\bm{10^{-\nicefrac{21}{9}}}$ & $\bm{10^{-\nicefrac{19}{9}}}$ & $\bm{10^{-\nicefrac{17}{9}}}$ & $\bm{10^{-\nicefrac{15}{9}}}$ & $\bm{10^{-\nicefrac{13}{9}}}$ & $\bm{10^{-\nicefrac{11}{9}}}$     & $\bm{10^{-1}}$              & $\mathbb{E}[|E_R|]$ \\ \hline
      $E_R [\%]$ \textbf{RD}  & $\color{red} \bm{-64.45}$ & $\color{red} -36.39$          & $-19.75$                      & $-11.24$                      & $-7.28$                       & $-4.85$                       & $-4.14$                       & $-4.86$                       & $-10.91$                          & $-18.51$                    & $16.54$             \\
      $E_R [\%]$  \textbf{SW} & $27.99$                   & $45.78$                       & $\color{BurntOrange}50.10$    & $\color{BurntOrange}44.50$    & $\color{BurntOrange}33.46$    & $21.41$                       & $9.49$                        & $-3.03$                       & $\color{BurntOrange}\bm{160.37}$  & $\color{BurntOrange}152.77$ & $64.48$             \\ \hline \hline
      $\bm{\sigma}$           & $\bm{10^{-3}}$            & $\bm{10^{-\nicefrac{25}{9}}}$ & $\bm{10^{-\nicefrac{23}{9}}}$ & $\bm{10^{-\nicefrac{21}{9}}}$ & $\bm{10^{-\nicefrac{19}{9}}}$ & $\bm{10^{-\nicefrac{17}{9}}}$ & $\bm{10^{-\nicefrac{15}{9}}}$ & $\bm{10^{-\nicefrac{13}{9}}}$ & $\bm{10^{-\nicefrac{11}{9}}}$     & $\bm{10^{-1}}$              & $\mathbb{E}[|E_P|]$ \\ \hline
      $E_P [\%]$  \textbf{RD} & $\color{red} -98.99$      & $\color{red} -97.55$          & $-93.91$                      & $-86.12$                      & $-73.85$                      & $-56.74$                      & $-43.41$                      & $-36.91$                      & $\bm{-100.00}$                    & $\bm{-100.00}$              & $80.78$             \\
      $E_P [\%]$  \textbf{SW} & $42.99$                   & $122.74$                      & $\color{BurntOrange}238.23$   & $\color{BurntOrange}340.98$   & $\color{BurntOrange}339.34$   & $250.42$                      & $99.55$                       & $-23.02$                      & $\color{BurntOrange}\bm{1469.93}$ & $\color{BurntOrange}825.33$ & $491.77$            \\ \hline
    \end{tabular}
  }
  \caption{Error and Relative Error w.r.t. Monte Carlo simulation for \cref{eq:riskapproxmult} (\textbf{RD}) and \cref{eq:stagewise} (\textbf{SW}). Negative numbers indicate an under estimation of probability. Values in red indicates an actual infeasibility of the trajectory while those in orange indicates feasible trajectories that would have been mischaracterized. Means are computed considering also values not shown in the tables. The maxima are boldened. \label{tab:errorobsta}}
\end{table*}

From \Cref{fig:cornercostrained} we see how the estimate given by \cref{eq:riskapproxmult} is always underestimating the actual probability, while \cref{eq:stagewise} overestimates the collision probability in all but one case.
The table \cref{tab:errorobsta} shows the error $E_P = P_{est}-P_M$ and the relative error $E_R = E_P/P_M$  respective to the Monte Carlo simulation. We see how the mean of $|E_P|$ and $|E_R|$ are both significantly lower in \cref{eq:riskapproxmult} case. In the individual scenarios our method produces a worse estimate in the absolute value only in the first cases, $\sim\times2.3$ worse, while doing $\sim \times 1.24 - 26 $ better in all other cases. Each iteration was optimized to tolerance in $\sim 0.3s$.
It is to note however that our method finds unfeasible solutions, highlighted in red in \cref{tab:errorobsta}, while \cref{eq:stagewise} mischaracterize potentially feasible paths, in orange.

\Cref{fig:cornercostrained} shows results consistent with the analysis in \Cref{sec:discussion}. While the optimization finds trajectories that are very close to the nominal position of the obstacles while $\sigma$ is low, intermediate values of the parameter lead to trajectories that are \enquote{further away} from the obstacles and higher values result in straight paths. We suppose that this is due to the same peaking phenomenon we have discussed about the probability estimates in \Cref{sec:discussion}.
When the uncertainty is sufficiently low, the optimization is lead to avoid the obstacles. However, when the uncertainty parameter $\sigma$ is high enough, the obstacles distributions widen to resemble a uniform distribution.
Consequently, the probability of collision $P_a$ when circumventing obstacles becomes comparable to that of taking a straight path, which is then preferred. This phenomenon is illustrated in the second plot from the right in Figure \ref{fig:cornercostrained}.
When the change of behaviour happens, the probability estimate of the optimized path, in light blue, is roughly equal to the one, in black, of the starting path: the prominent part of $P_a$ is given by the initial and final position and moving $x_c$ in any direction does not lower $P_a$ significantly.

\begin{remark}
  As highlighted by \Cref{fig:cornercostrained}, our method correctly captures the change of behaviour in probability displayed at $\sigma \simeq 0.033$: both the Monte Carlo estimate and \cref{eq:riskapproxmult} trend downwards, while \cref{eq:stagewise} captures the opposite behaviour.
\end{remark}

\section{CONCLUSIONS}
\label{sec:conclusion}
This paper presents an analysis of the sensitivity of collision probability to small perturbations in the robot/object dimensions, and introduces the idea of \enquote{risk density}.  We show that this concept carries over different assumptions about the interplay of collision events, modeling different practical situations and offering a theoretical link between the hypotheses.

The derivation of such approximation starts from analyzing the intersection probability of two uncertain shapes and then generalizing it to a continuous path.
After discussing this theoretical property of risk density, we showed how a probability approximation scheme based on it can be applied to the case of a robot moving in a deterministic unknown environment with uncertain initial conditions.
We have thus shown how the proposed approximation offers accuracy and performance comparable to or better than other popular probability estimation methods. 
We have also displayed how the risk density can be used to avoid computing a Montecarlo simulation when a slight change in environment occurs.
The proposed risk-density based approximation does not introduce the need for tuning parameters that are not intrinsic to the problem.
Finally we have shown the usefulness of the method in an optimization setting.
\subsection*{Limitations and Future Research}

Our numerical comparisons were based on assumption (H3) only, i.e. uncertainty only in the initial condition. While valid, the assumption is a simplification of the true stochastic nature of the dynamics. Describing the collision event as a stopped process eludes considering the conditioning between states: a better approach would consider that.
Our approach to computing collision probability does not currently consider non-Gaussian and non-stationary distributions for the uncertainty. Therefore, further investigation in this regard is warranted.
The fact that the \emph{Risk Density} definition coincides between (H2) and (H3) encourages to explore its application to the (H2) case as well. 
In the context of \emph{Risk Density}, exploring further the theoretical link between (H2) and (H3) and how it can be translated to practical approximations also under other hypotheses is of interest.
Moreover, while in this manuscript we have carried out the theoretical derivation of the same method in $\mathbb{R}^3$, its efficacy and the further generalizations are to be explored.

Finally the method explored does not factor rotations of the objects in the probability computation. This deficiency is exposed as soon as we do not introduce a overbounding ball and consider directly the robot shape. This issue could be ameliorated by the use of specialized distributions (e.g. Von-Mises or Wrapped Gaussian), or by working directly in a differential geometry setting. We plan to explore these ideas in the future.

\section{ACKNOWLEDGEMENTS}
\label{sec:ack}
This work was supported by the EU Project DARKO - Dynamic and Agile production Robots that learn and optimize Knowledge and Operations (Grant ID 101017274), Fit4MedRob - Fit for Medical Robotics Foundation (PNC0000007).
Moreover, this work was carried out within the framework of the project \enquote{RAISE - Robotics and AI for Socio-economic Empowerment} and has been supported by European Union - NextGenerationEU.
The views and opinions expressed are, however, those of the authors alone, and do not necessarily reflect those of the European Union or the European Commission. Neither the European Union nor the European Commission can be held responsible for them.

\appendix
\section{A}
\label{app:A}
\subsection{Risk Density in $\mathbb{R}^3$}
\subsubsection{Probability Approximation}
Consider the sphere $b\subseteq\mathbb{R}^3$ bounding the Minkowksi difference between the robot and the object shape from outside. The tube swept by the translation of $b$ along the path $\mu_R(s)$ can be parametrized by a function $\Phi(s,t,\theta): \mathbb{S} \times [0,\overline{T}] \times [0, 2\pi) \to \mathbb{R}^3$ as
\begin{equation}
    \label{eq:3Dtube}
    \Phi (s,t,\theta) = \mu(s) + t \cdot \left( - \hat{N}(s)\cos(\theta) + \hat{B}(s)\sin(\theta) \right).
\end{equation}
Here $\hat{T}(s) = \frac{d\mu_R(s)}{ds}/|\frac{d\mu_R(s)}{ds}|$ is the vector tangent to the curve, $\hat{N}(s) = \frac{d\hat{T}(s)}{ds}/|\frac{d\hat{T}(s)}{ds}|$ is the normal vector defining one of the base vectors of the plane normal to $\mu_R(s)$, the other given by the binormal vector $\hat{B}(s) = \hat{T}(s) \times \hat{N}(s)$.

Computing the formulas corresponding to \cref{eq:probabilityRVparam} and \cref{eq:volterraprob} is straightforward and follows the same procedure as in \Cref{sec:statement}, since \cref{def:H2} and \cref{def:H3} do not depend on the dimension of the space which the sets $D_{RO}(s)$ live in.
The resulting expressions are
\begin{align}
    \label{eq:H3probability3D}
     & P^{a}_{H3}(C) = \nonumber                                                                                                     \\
     & \int_{0}^{2\pi} \int_{0}^{\overline{T}} \int_{0}^{1}\mathcal{N}(\gamma,t,\theta) \Gamma(\gamma,t,\theta)  d\gamma dt d\theta,
\end{align}
\small
\begin{align}
    \label{eq:H2probability3D}
     & P^{a}_{H2}(C)= \nonumber                                                                                                                            \\
     & 1-\exp\left( - \int_0^1 \int_{0}^{2\pi} \int_{0}^{\overline{T}}  \mathcal{N}(\gamma,t,\theta) \Gamma(\gamma,t,\theta)  dt d\theta d\gamma  \right),
\end{align}
\normalsize
where $\mathcal{N}(\gamma,t,\theta) := \mathcal{N}(\Phi(\gamma,t,\theta) - \mu_O|0,\Sigma_T)$, $\Gamma(\gamma,t,\theta) =|\det (\nabla \Phi(\gamma,t,\theta))|$ and the vectors and matrices live in $\mathbb{R}^3$.
\subsubsection{Risk Density approximation in $\mathbb{R}^3$}
Using the Liebnitz integral rule, we write
\begin{align}
    \label{eq:3DsensH3}
     & S_{r}^{H3}(\mu_r,\overline{T}) = \frac{d P^a_{H3}}{dT} = \nonumber                                                     \\
     & \int_{0}^{2\pi} \int_{0}^{1}\mathcal{N}(\gamma,\overline{T},\theta) \Gamma(\gamma,\overline{T},\theta) d\gamma d\theta
\end{align}
and
\footnotesize
\begin{align}
    \label{eq:3DsensH2}
     & S_{r}^{H2}(\mu_r,\overline{T}) = \frac{d P^a_{H2}}{dT} = \nonumber                                                                                                             \\
     & S_{r}^{H3}(\mu_r,\overline{T}) \exp \left(- \int_0^{\overline{T}} \int_{0}^{2\pi} \int_{0}^{1}\mathcal{N}(\gamma,t,\theta) \Gamma(\gamma,t,\theta) d\gamma d\theta dt \right).
\end{align}
\normalsize
Notice how $\Phi(s,0,\theta) = \mu_R(s)$, however $\Gamma(s,0,\theta)=0$ $\forall s,\theta$, since the Jacobian matrix of $\Phi$ is singular when $t=0$.
\begin{equation}
    \label{eq:gamma3D}
    \Gamma(s,t,\theta) =
    \left| \det \left(
    \begin{bmatrix}
            \frac{d\Phi}{ds}
             & \frac{d \Phi}{dt}
             & \frac{d\Phi}{d\theta}
        \end{bmatrix}
    \right) \right|
\end{equation}
where
\begin{align}
    \label{eq:phids}
     & \frac{d \Phi}{ds} = \hat{T} \left|\frac{d \mu_R}{ds}\right| + t \overbrace{\left( \frac{d\hat{B}}{ds} \sin (\theta) - \frac{d\hat{N}}{ds} \cos (\theta) \right)}^{\textcircled{\raisebox{-0.9pt}{1}}}, \\
    \label{eq:phidt}
     & \frac{d \Phi}{dt} = \hat{B}(s)\sin(\theta) - \hat{N}(s)\cos(\theta),                                                                                                                                   \\
    \label{eq:phidtheta}
     & \frac{d\Phi}{d\theta} = t\cdot(\hat{N}(s)\sin(\theta) + \hat{B}(s)\cos(\theta)).
\end{align}
So the sensitivities in $\overline{T}=0$ both vanish
\begin{equation}
    \label{eq:3Dsens=0}
    S_{r}^{H2}(\mu_r,0) = S_{r}^{H3}(\mu_r,0) = 0.
\end{equation}
Nevertheless, consider the second order sensitivities
\begin{equation}
    \label{eq:3Dsecondorder}
    S_{r'}^{H2/3}(\mu_r,\overline{T}) = \frac{d^2P^a_{H2/3}(c)}{dT^2}.
\end{equation}
By rewriting the determinant inside \cref{eq:gamma3D} as a scalar triple product
\begin{equation}
    \label{eq:tripleproduct}
    \det(\nabla \Phi) = \frac{d\Phi}{ds} \cdot \left( \frac{d\Phi}{dt} \times \frac{d\Phi}{d\theta} \right).
\end{equation}
The vector product part of this expression simplifies as
\begin{align}
    \label{eq:vectorproduct}
     & \hat{B}\sin(\theta) \times  \hat{N} \sin(\theta) t - \hat{N}\cos(\theta) \times \hat{N}\sin(\theta)t  \nonumber     \\
     & + \hat{B}\sin(\theta) \times \hat{B} \cos(\theta) t - \hat{N} \cos(\theta) \times \hat{B} \cos (\theta) t \nonumber \\
     & = \hat{B}\times \hat{N} \sin^2(\theta) t +  \hat{B}\times \hat{N} \cos^2(\theta) t \nonumber                        \\
     & = - t \hat{T},
\end{align}
using the anticommutative propriety of the vector product.
Moreover, using the Frenet-Serret formulas, $\textcircled{\raisebox{-0.9pt}{1}}$ becomes
\begin{equation}
    \label{eq:onerewritten}
    \textcircled{\raisebox{-0.9pt}{1}} = \left( -\tau(s) \hat{N} \sin(\theta) - (\tau(s) \hat{B} - \kappa(s) \hat{T})\cos(\theta) \right),
\end{equation}
where $\kappa: \mathbb{S} \to \mathbb{R}$ and $\tau: \mathbb{S} \to \mathbb{R}$ are the curvature and the torsion of $\mu_R$ respectively\footnote{These have an algebraic form given that $\mu_R$ is of class $\mathcal{C}^3$, but their value is of no interest here.}.
Being the scalar product bilinear and using again \cref{eq:tripleproduct}, \cref{eq:gamma3D} is then
\begin{align}
    \label{eq:twopieces}
    \Gamma(s,t,\theta) = & \left|\hat{T} \left| \frac{d\mu_R}{ds} \right| \cdot (- t \hat{T}) + t \textcircled{\raisebox{-0.9pt}{1}} \cdot (- t \hat{T})\right| \nonumber \\
                         & \left| \left(t \left| \frac{d\mu_R}{ds} \right| + t^2 \kappa(s) \cos(\theta) \right) \right |.
\end{align}
We can then write
\footnotesize
\begin{align}
    \label{eq:3DsecondorderH3}
     & S_{r'}^{H3} = \nonumber                                                                                                                                                                                                                   \\
     & \int_{0}^{2\pi} \int_{0}^{1}\left(\frac{d\mathcal{N}}{dT}(\gamma,\overline{T},\theta) \Gamma(\gamma,\overline{T},\theta) + \mathcal{N}(\gamma,\overline{T},\theta) \frac{d\Gamma}{dT}(\gamma,\overline{T},\theta) \right)d\gamma d\theta.
\end{align}
\normalsize
When \cref{eq:3DsecondorderH3} is evaluated in $\overline{T}=0$ it results in
\begin{align}
    \label{eq:3DsecondorderH3_0}
     & S_{r'}^{H3}(\mu_R,0) = \nonumber                                                                                                             \\
     & \int_{0}^{2\pi} \int_{0}^{1}\mathcal{N}(\gamma,0,\theta) \frac{d\Gamma}{dT}(\gamma,0,\theta) d\gamma d\theta =\text{\footnotemark} \nonumber \\
     & \int_{0}^{2\pi} \int_{0}^{1}\mathcal{N}(\gamma,0,\theta) \left| \frac{d\mu_R}{d\gamma} \right| d\gamma d\theta = \nonumber                   \\
     & 2\pi \int_{0}^{1}\mathcal{N}(\mu_R(\gamma) - \mu_O | 0, \Sigma_T) \left| \frac{d\mu_R}{d\gamma} \right| d\gamma = \nonumber                  \\
     & \pi \cdot rd(\mu_R(\cdot)).
\end{align}
\footnotetext{Here $\Gamma(\gamma,T,\theta) = |f(\gamma,T,\theta)|$. The derivative $\frac{d\Gamma}{dT} = \frac{f}{\Gamma}(|\frac{d\mu_R}{d\gamma}| + T \kappa(\gamma) \cos(\theta))$ simplifies to $|\frac{d\mu_R}{d\gamma}|$ since $\lim_{T\to 0} \frac{f}{\Gamma} = 1$, being $T\geq 0$ by definition, cancelling out the fraction.}
Using \cref{eq:3DsecondorderH3}, we derive
\footnotesize
\begin{align}
    \label{eq:3DsecondorderH2}
     & S_{r'}^{H2} = \frac{d}{dT}(\ref{eq:3DsensH2}) = \nonumber                                                                                                                       \\
     & (S_{r'}^{H3} -  (S_{r}^{H3})^2 ) \exp\left(- \int_0^{\overline{T}} \int_{0}^{2\pi} \int_{0}^{1}\mathcal{N}(\gamma,t,\theta) \Gamma(\gamma,t,\theta) d\gamma d\theta dt \right).
\end{align}
\normalsize
Finally, in $\overline{T}=0$ this is
\begin{equation}
    \label{eq:3DsecondorderH23_0}
    S_{r'}^{H2}(\mu_R,0) = S_{r'}^{H3}(\mu_R,0) = \pi \cdot rd(\mu_R(\cdot)).
\end{equation}
Moreover, an approximation similar in nature to \cref{eq:taylorwrtT} is recovered using a second order Taylor approximation
\begin{equation}
    \label{eq:taylor_tubo}
    P_a(C,T) = rd(\mu_R(\cdot)) \cdot \pi T^2,
\end{equation}
which makes intuitive sense as a \enquote{cylindrical approximation}.

\subsection{Algorithms}
We enumerate here the computational procedures used to approximate probabilities in \Cref{sec:numval} and \Cref{sec:trajopt}.

\begin{algorithm}
    \caption{Monte Carlo Simulation - H1}
    \label{alg:groundtruthH1}
    \begin{algorithmic}
        \State \textbf{Input:} Path function $\mathbf{\mu_R(s)}$ \Comment{Domain: $[0,1]$}
        \State Obstacle mean position $\mathbf{\mu_O}$
        \State Combined Covariance $\mathbf{\Sigma_T}$
        \State Combined Radius $\mathbf{r}$
        \State Number of Monte Carlo Trials $\mathbf{N_t}$
        \State Discretization of the Path $\mathbf{N_s}$
        \State \textbf{Output:} Probability estimate $\mathbf{P_T}$
        \State Initialize collision counter $\mathbf{c=0}$
        \For{$j$ = $1$ to $N_t$}    \Comment{Done in parallel}
        \For{$s_i$ = $0$ to $N_s$}
        \State Sample $\mu^{*ij}_O$ from $\mathcal{N}(\mu_O,\Sigma_T)$
        \State Compute $d^{j}_i=\mu_R(\frac{s_i}{N_s})-\mu^{*ij}_O$
        \If{$norm(d^{j}_i)\leq r$}
        \State $c = c+1$
        \State \textbf{break for}
        \EndIf
        \EndFor
        \EndFor
        \State $\mathbf{\hat{P}= \frac{c}{N_t}}$
    \end{algorithmic}
\end{algorithm}
\begin{algorithm}
    \caption{Monte Carlo Simulation - H2}
    \label{alg:groundtruthH2}
    \begin{algorithmic}
        \State \textbf{Input:} $\mathbf{\mu_R(s)}$, $\mathbf{\mu_O}$, $\mathbf{\Sigma_T}$, $\mathbf{r}$, $\mathbf{N_t}$, $\mathbf{N_s}$
        \State \textbf{Output:} Probability estimate $\mathbf{P_T}$
        \State Initialize collision counter $\mathbf{c=0}$
        \For{$j$ = $1$ to $N_t$}    \Comment{Done in parallel}
        \For{$s_i$ = $0$ to $N_s$}
        \State Sample $\mu^{*ij}_O$ from $\mathcal{N}(\mu_O,\Sigma_T)$
        \If{$s_i>0$} \Comment{Na\"ive Rejection Sampling}
        \State Compute $dp^{j}_i=\mu_R(\frac{s_i-1}{N_s})-\mu^{*ij}_O$
        \While{$norm(dp^{j}_i)\leq r$}
        \State Sample $\mu^{*ij}_O$ from $\mathcal{N}(\mu_O,\Sigma_T)$
        \State Compute $dp^{j}_i=\mu_R(\frac{s_i-1}{N_s})-\mu^{*ij}_O$
        \EndWhile
        \EndIf
        \State Compute $d^{j}_i=\mu_R(\frac{s_i}{N_s})-\mu^{*ij}_O$

        \If{$norm(d^{j}_i)\leq r$}
        \State $c = c+1$
        \State \textbf{break for}
        \EndIf
        \EndFor
        \EndFor
        \State $\mathbf{\hat{P}= \frac{c}{N_t}}$
    \end{algorithmic}
\end{algorithm}
\begin{algorithm}
    \caption{Monte Carlo Simulation - H3}
    \label{alg:groundtruth}
    \begin{algorithmic}
        \State \textbf{Input:} $\mathbf{\mu_R(s)}$, $\mathbf{\mu_O}$, $\mathbf{\Sigma_T}$, $\mathbf{r}$, $\mathbf{N_t}$, $\mathbf{N_s}$
        \State \textbf{Output:} Probability estimate $\mathbf{P_T}$
        \State Initialize collision counter $\mathbf{c=0}$
        \For{$j$ = $1$ to $N_t$}    \Comment{Done in parallel}
        \State Sample $\mu^{*j}_O$ from $\mathcal{N}(\mu_O,\Sigma_T)$
        \For{$s_i$ = $0$ to $N_s$}
        \State Compute $d^{j}_i=\mu_R(\frac{s_i}{N_s})-\mu^{*j}_O$
        \If{$norm(d^{j}_i)\leq r$}
        \State $c = c+1$
        \State \textbf{break for}
        \EndIf
        \EndFor
        \EndFor
        \State $\mathbf{\hat{P}= \frac{c}{N_t}}$
    \end{algorithmic}
\end{algorithm}
\begin{algorithm}
    \caption{Probability Estimate with Risk Density, Single Obstacle}
    \label{alg:rdestimate}
    \begin{algorithmic}
        \State \textbf{Input:} $\mathbf{\mu_R(s)}$, $\mathbf{\mu_O}$, $\mathbf{\Sigma_T}$, $\mathbf{r}$
        \State \textbf{Output:} Probability estimate $\mathbf{P_a}$
        \State Compute $rd$ \cref{eq:riskmeasure} with inputs $\mu_R(s),\mu_O,\Sigma_T$
        \State $\mathbf{P_a = rd \cdot r}$
    \end{algorithmic}
\end{algorithm}
\begin{algorithm}
    \caption{Probability Estimate with Risk Density, Multiple Obstacles}
    \label{alg:rdmultiple}
    \begin{algorithmic}
        \State \textbf{Input:} $\mathbf{\mu_R(s),\overline{\mu}_O,\overline{\Sigma}_T, \overline{r}}$ \Comment{Barred variables are vectors of their respective elements.}
        \State \textbf{Output:} Probability Estimate $\mathbf{P_a}$
        \State $\overline{rd} =$ \cref{eq:multipleRD} with inputs $\mu_R(s),\overline{\mu}_O,\overline{\Sigma}_T$
        \State $\mathbf{P_a = rd_{\mathbb{O}}^{\top} \cdot \overline{r}}$
    \end{algorithmic}
\end{algorithm}
\break

\bibliographystyle{IEEEtran}
\bibliography{main}

\end{document}